\newcolumntype{C}[1]{>{\centering\let\newline\\\arraybackslash\hspace{0pt}}m{#1}}
\newcommand{\R}{\mathbb{R}}
\newcommand{\Exp}{\mathbf{E}}
\newcommand{\Prob}{\mathbf{P}}
\newcommand{\grad}{ g} 
\newcommand{\yy}{ y}
\newcommand{\xx}{ x}
\newcommand{\ff}{ f}
\newcommand{\kk}{ j}
\newcommand{\cc}{ c} 
\newcommand{\eqdef}{\overset{\text{def}}{=}}
\newcommand{\peter}[1]{{\color{red} #1}}
\newtheorem{theorem}{Theorem}
\newtheorem{lemma}[theorem]{Lemma}
\newtheorem{assumption}[theorem]{Assumption}
\title{Semi-Stochastic  Gradient Descent Methods}
\author{Jakub Kone\v{c}n\'{y} \footnote{School of Mathematics, The University of Edinburgh, United Kingdom (e-mail: J.Konecny@sms.ed.ac.uk)} \qquad \qquad  Peter Richt\'{a}rik \footnote{School of Mathematics, The University of Edinburgh, United Kingdom (e-mail: peter.richtarik@ed.ac.uk) \qquad
The work of both authors was supported by the Centre for Numerical Algorithms and Intelligent Software (funded by EPSRC grant EP/G036136/1 and the Scottish Funding Council). Both authors also thank the Simons Institute for the Theory of Computing, UC Berkeley, where this work was conceived and finalized. The work of P.R. was also supported by the EPSRC grant EP/I017127/1 (Mathematics for Vast Digital Resources) and EPSRC grant EP/K02325X/1 (Accelerated Coordinate Descent Methods for Big Data Problems). }\\\\ {\em School of Mathematics}\\
{\em University of Edinburgh}\\
{\em United Kingdom}}
\date{June 15, 2015 (first version: arXiv:1312.1666)}
\begin{document}

\maketitle

\begin{abstract} In this paper we  study the problem of minimizing the average of a large number ($n$) of smooth convex loss functions. We propose a new method, S2GD  (Semi-Stochastic Gradient Descent), which runs for one or several epochs in each of which a single full gradient and a random number of stochastic gradients is computed, following a geometric law. The total work needed for the method to output an $\varepsilon$-accurate solution in expectation, measured in the number of passes over data, or equivalently, in units equivalent to the computation of a single gradient of the empirical loss, is $O((n / \kappa)\log(1/\varepsilon))$, where $\kappa$ is the condition number. This is achieved by running the method for  $O(\log(1/\varepsilon))$ epochs,  with a single gradient evaluation and $O(\kappa)$ stochastic gradient evaluations in each. The SVRG method of Johnson and Zhang \cite{svrg} arises as a special case. If our method is limited to a single epoch only,  it needs to evaluate at most $O((\kappa/\varepsilon)\log(1/\varepsilon))$ stochastic gradients. In contrast, SVRG requires $O(\kappa/\varepsilon^2)$ stochastic gradients. To illustrate our theoretical results, S2GD only needs the workload equivalent to about 2.1 full gradient evaluations to find an $10^{-6}$-accurate solution for a problem with $n=10^9$ and $\kappa=10^3$.

%
\end{abstract}


\section{Introduction}

Many problems in data science (e.g., machine learning, optimization and statistics) can be cast as loss minimization problems of the form \begin{equation}\label{eq:main}\min_{x \in \mathbb{R}^d} \ff(x),\end{equation}
where
\begin{equation}\label{eq:main2} \ff(x) \eqdef \frac{1}{n} \sum_{i=1}^n \ff_i(x).\end{equation}

Here $d$ typically denotes the number of features / coordinates, $n$ the number of examples, and $\ff_i(x)$ is the loss incurred on example $i$. That is, we are seeking to find a predictor $x \in \mathbb{R}^d$ minimizing the average loss $\ff(x)$. In big data applications, $n$ is typically very large; in particular, $n \gg d$.  

Note that this formulation includes more typical formulation of $L2$-regularized objectives --- $\ff(x) = \frac{1}{n} \sum_{i=1}^n \tilde{\ff}_i(x) + \frac{\lambda}{2} \| x \|^2. $ We hide the regularizer into the function $\ff_i(x)$ for the sake of simplicity of resulting analysis.

\subsection{Motivation}

Let us now briefly review  two basic approaches to solving problem \eqref{eq:main}.
\begin{enumerate}
\item \emph{Gradient Descent.} Given $x_k \in \R^d$, the gradient descent (GD) method sets $$ x_{k+1} = x_k - h \ff'(x_k), $$ where $h$ is a stepsize parameter and $\ff'(x_k)$ is the gradient of $\ff$ at $x_k$. We will refer to $\ff'(x)$ by the name \emph{full gradient}. In order to compute $\ff'(x_k)$, we need to compute the gradients of $n$ functions. Since $n$ is big, it is prohibitive to do this at every iteration. 

\item \emph{Stochastic Gradient Descent (SGD).} Unlike gradient descent,  stochastic gradient descent \cite{nemirovski2009robust, tongSGD}  instead picks a random $i$ (uniformly) and updates $$ x_{k+1} = x_k - h \ff'_i(x_k). $$ Note that this strategy drastically reduces the amount of work that needs to be done in each iteration (by the factor of $n$). Since \[ \Exp(\ff_i'(x_k))  = \ff'(x_k),\] we have an unbiased estimator of the full gradient. Hence, the gradients of the component functions $\ff_1,\dots,\ff_n$ will be referred to as \emph{stochastic gradients}.  A practical issue with SGD is that consecutive stochastic gradients may vary a lot or even point in opposite directions. This slows down the performance of SGD. On balance, however, SGD is preferable to GD in applications where low accuracy solutions are sufficient. In such cases usually only a small number of passes through the data (i.e., work equivalent to a small number of full gradient evaluations) are needed to find an acceptable $x$. For this reason, SGD is extremely popular in fields such as machine learning.

\end{enumerate}

 In order to improve upon GD, one needs to reduce the cost of computing a gradient. In order to improve upon SGD, one has to reduce the variance of the stochastic gradients.  In this paper we propose and analyze a \emph{Semi-Stochastic Gradient Descent} (S2GD) method. Our  method combines GD and SGD steps and reaps the benefits of both algorithms: it inherits the stability and speed of GD and at the same time retains the work-efficiency of SGD.

\subsection{Brief literature review}

Several recent papers, e.g., Richt\'{a}rik \& Tak\'{a}\v{c} \cite{richtarik}, Le Roux, Schmidt \& Bach \cite{SAG,SAGjournal2013}, Shalev-Shwartz \& Zhang \cite{SDCA} and Johnson \& Zhang \cite{svrg}  proposed methods which achieve such a variance-reduction effect, directly or indirectly. These methods enjoy linear convergence rates when applied to minimizing  smooth  strongly convex loss functions. 

The method in \cite{richtarik} is known as Random Coordinate Descent for Composite functions (RCDC), and can be either applied directly to \eqref{eq:main}---in which case a single iteration requires $O(n)$ work for a dense problem, and $O(d \log(1/\varepsilon))$ iterations in total---or  to a dual version of \eqref{eq:main}, which requires $O(d)$ work per iteration and $O((n+\kappa)\log(1/\varepsilon))$ iterations in total. Application of a coordinate descent method to a dual formulation of \eqref{eq:main} is generally referred to as Stochastic Dual Coordinate Ascent (SDCA) \cite{SDCA-2008}. The algorithm in \cite{SDCA} 
exhibits this duality, and the method in \cite{minibatch-ICML2013} extends the primal-dual framework to the parallel /  mini-batch setting. Parallel  and distributed stochastic coordinate descent methods were studied in \cite{RT:PCDM, FR:SPCDM2013, RT:Hydra2013}.

Stochastic Average Gradient (SAG) \cite{SAG} is one of the first SGD-type methods, other than coordinate descent methods, which were shown to exhibit  linear convergence. The method of Johnson and Zhang \cite{svrg}, called Stochastic Variance Reduced Gradient (SVRG),  arises as a special case in our setting for a suboptimal choice of a single parameter of our method. The Epoch Mixed Gradient Descent (EMGD) method \cite{zhanglijun} is similar in spirit to SVRG, but achieves a quadratic dependence on the condition number instead of a linear dependence, as is the case with SAG, SVRG and with our method. 

For classical work on semi-stochastic gradient descent methods we refer\footnote{We thank Zaid Harchaoui who pointed us to these papers a few days before we posted our work to arXiv.} the reader to the papers of Murti and Fuchs \cite{MF79, MF86}.  

\subsection{Outline}

We start in Section~\ref{SEC:S2GD} by describing two algorithms: S2GD, which we analyze, and S2GD+, which we do not analyze, but which exhibits superior performance in practice. We then move to summarizing some of the main contributions  of this paper in Section~\ref{SEC:summary}. Section~\ref{SEC:strong} is devoted to establishing expectation and high probability complexity results for S2GD in the case of a strongly convex loss. The results are generic in that the parameters of the method are set arbitrarily. Hence, in Section~\ref{SEC:OPT} we study the problem of choosing the parameters optimally, with the goal of minimizing the total workload (\# of processed examples) sufficient to produce a result of sufficient accuracy. 
In Section~\ref{SEC:convex} we establish high probability complexity bounds for S2GD applied to a non-strongly convex loss function. Finally, in Section~\ref{SEC:NUMERICS} we perform very encouraging numerical experiments on real and artificial problem instances. A brief conclusion can be found in Section~\ref{SEC: CONCLUDE}.

\section{Semi-Stochastic Gradient Descent} \label{SEC:S2GD}

In this section we describe two novel algorithms: S2GD and S2GD+. We analyze the former only. The latter, however, has superior convergence properties in our experiments. 

We assume throughout the paper that the functions $\ff_i$ are convex and $L$-smooth.

\begin{assumption}\label{ass:Lip}
The functions $f_1,\dots,f_n$ have Lipschitz continuous gradients with constant $L > 0$ (in other words, they are $L$-smooth). That is, for all $x,z \in \R^d$ and all $i=1,2,\dots,n$,
$$ \ff_i(z) \leq \ff_i(x) + \langle \ff_i'(x), z - x \rangle + \frac{L}{2} \| z - x \|^2.$$
(This implies that the gradient of $\ff$ is Lipschitz with constant $L$, and hence $\ff$ satisfies the same inequality.)
\end{assumption}

In one part of the paper (Section~\ref{SEC:strong}) we also make the following additional assumption:

\begin{assumption}\label{ass:strong}
The average loss $f$ is $\mu$-strongly convex, $\mu>0$. That is, for all $x,z \in \R^d$,
\begin{equation}
f(z) \geq f(x) + \langle f'(x), z - x \rangle + \frac{\mu}{2} \| z - x \|^2. \label{SVRGstrcvx}
\end{equation}
(Note that, necessarily, $\mu\leq L $.)
\end{assumption}

\subsection{S2GD}

Algorithm~\ref{SVRG} (S2GD) depends on three parameters: stepsize $h$, constant $m$ limiting the number of stochastic gradients computed in a single epoch, and a $\nu \in [0,\mu]$, where $\mu$ is the strong convexity constant of $f$. In practice, $\nu$ would be a known lower bound on $\mu$. Note that the algorithm works also without any knowledge of the strong convexity parameter --- the case of $\nu = 0$.

\begin{algorithm}
\begin{algorithmic}
\State \textbf{parameters:} $m$ = max \# of stochastic steps per epoch, $h$ = stepsize, $\nu$ = lower bound on $\mu$
\For {$\kk = 0, 1, 2, \dots$}
	\State $\grad_{\kk} \gets \frac{1}{n} \sum_{i=1}^n f_i'(\xx_{\kk})$
	\State $\yy_{\kk,0} \gets \xx_{\kk}$
	\State Let $t_{\kk} \gets t$ with probability $(1 - \nu h)^{m-t} / \beta $ for $t = 1, 2, \dots, m$
	\For {$t = 0$ to $t_{\kk}-1$}
		\State Pick $i \in \{ 1, 2, \dots, n \}$, uniformly at random
		\State $ \yy_{\kk,t+1} \gets \yy_{\kk,t} - h \left( \grad_{\kk} + f_i'(\yy_{\kk,t}) - f_i'(\xx_{\kk})  \right) $
	\EndFor
	\State $\xx_{\kk+1} \gets \yy_{\kk, t_{\kk}}$
\EndFor
\end{algorithmic}

\caption{Semi-Stochastic Gradient Descent (S2GD)}
\label{SVRG}
\end{algorithm}

The method has an outer loop, indexed by epoch counter $\kk$, and an inner loop, indexed by $t$. In each epoch $\kk$, the method first computes $\grad_{\kk}$---the \emph{full} gradient of $f$ at $\xx_{\kk}$. Subsequently, the method produces a random number $t_{\kk} \in [1,m]$ of steps, following a geometric law, where
\begin{equation}\label{eq:beta} \beta \eqdef \sum_{t = 1}^m (1 - \nu h)^{m-t},\end{equation}
 with only \emph{two stochastic gradients} computed in each step\footnote{It is possible to get away with computinge only a \emph{single} stochastic gradient per inner iteration, namely $f_i'(\yy_{\kk,t})$, at the cost of having to store in memory $f'_i(\xx_\kk)$ for $i=1,2,\dots,n$. This, however, will be impractical for big $n$.}.  For each $t = 0, \dots, t_{\kk}-1$, the stochastic gradient $f'_i(\xx_{\kk})$ is subtracted from $\grad_{\kk}$, and $f'_i(\yy_{\kk,t-1})$ is added to $\grad_{\kk}$, which ensures that,  one has \[\Exp(\grad_{\kk} + f'_i(\yy_{\kk,t}) - f'_i(\xx_{\kk}) ) = f'(\yy_{\kk,t}),\]
 where the expectation is with respect to the random variable $i$.

Hence, the algorithm is   stochastic gradient descent -- albeit executed in a nonstandard way (compared to the traditional implementation described in the introduction).

Note that for all $\kk$, the expected number of iterations of the inner loop, $\Exp(t_{\kk})$, is equal to
\begin{equation} \label{eq:syhs7s5hs}  \xi = \xi(m,h) \eqdef  \sum_{t=1}^{m} t \frac{(1-\nu h)^{m-t}}{\beta}.\end{equation}
Also note that $\xi \in [\tfrac{m+1}{2},m)$, with the lower bound attained for $\nu=0$, and the upper bound for $\nu h \to 1$. 

\subsection{S2GD+}

We also implement Algorithm~\ref{alg:S2GD+}, which we call S2GD+. In our experiments, the performance of this method is superior to all methods we tested, including S2GD. However, we do not analyze the complexity of this method and leave this as an open problem.

\begin{algorithm}
\begin{algorithmic}
\State \textbf{parameters:} $\alpha \geq 1$ (e.g., $\alpha=1$)
\State 1. Run SGD for a single pass over the data (i.e., $n$ iterations); output $x$
\State 2. Starting from $x_0=x$, run a version of S2GD in which $t_j = \alpha n$ for all $j$
\end{algorithmic}
\caption{S2GD+}
\label{alg:S2GD+}
\end{algorithm}

In brief, S2GD+ starts by running SGD for 1 epoch (1 pass over the data) and then switches to a variant of S2GD in which the number of the inner iterations, $t_j$, is not random, but fixed to be $n$ or a  small multiple of $n$. 

The motivation for this method is the following. It is common knowledge that SGD is able to progress much more in one pass over the data than GD (where this would correspond to a single gradient step). However, the very first step of S2GD is the computation of the full gradient of $f$. Hence, by starting with a single pass over data using SGD and \emph{then} switching to S2GD, we obtain a superior method in practice.\footnote{Using a single pass of SGD as an initialization strategy was already considered in \cite{SAG}. However, the authors claim that their implementation of vanilla SAG did not benefit from it. S2GD does benefit from such an initialization due to it starting, in theory, with a (heavy) full gradient computation.}


\section{Summary of Results} \label{SEC:summary}

In this section we summarize some of the main results and contributions of this work.

\begin{enumerate}
\item \textbf{Complexity for strongly convex $f$.} If $f$ is strongly convex, S2GD needs \begin{equation}\label{eq:sjss5s4s}{\cal W} = O((n+\kappa)\log(1/\varepsilon))\end{equation} work (measured as the total number of evaluations of the stochastic gradient, accounting for the full gradient evaluations as well) to output an $\varepsilon$-approximate solution (in expectation or in high probability), where $\kappa=L/\mu$ is the condition number. This is achieved by running S2GD with stepsize $h=O(1/L)$, $\kk = O(\log (1/\varepsilon))$ epochs (this is also equal to the number of full gradient evaluations) and $m = O(\kappa )$ (this is also roughly equal to the number of stochastic gradient evaluations in a single epoch). The complexity results are stated in detail in Sections~\ref{SEC:strong} and \ref{SEC:OPT} (see Theorems~\ref{thm:expalpha}, \ref{thm:hpresult} and \ref{thm:main2}; see also \eqref{eq:m:nuismu0_2} and \eqref{eq:0sjsys8jns}).

\item \textbf{Comparison with existing results.} This complexity result \eqref{eq:sjss5s4s} matches the best-known results obtained for strongly convex losses in recent work such as \cite{SAG}, \cite{svrg} and \cite{zhanglijun}. Our treatment is most closely related to \cite{svrg}, and contains their method (SVRG) as a special case. However, our complexity results have better constants, which has  a discernable effect in practice. In Table~\ref{tbl:comparison} we compare our results in the strongly convex case with other existing results for different algorithms.

\begin{table}[h!]
\begin{center}
\begin{tabular}{|c|c|}
\hline
Algorithm & Complexity/Work \\
\hline \hline
Nesterov's algorithm & $O\left(\sqrt{\kappa}n\log(1/\varepsilon)\right)$ \\ EMGD & $O\left((n + \kappa^2)\log(1/\varepsilon)\right)$ \\ 
SAG & $O\left(n\log(1/\varepsilon)\right)$ \\ 
SDCA & $O\left((n + \kappa)\log(1/\varepsilon)\right)$ \\ 
SVRG & $O\left((n + \kappa)\log(1/\varepsilon)\right)$ \\ 
\hline
\textbf{S2GD} & $O\left((n + \kappa)\log(1/\varepsilon)\right)$ \\ \hline
\end{tabular}
\end{center}
\caption{Comparison of performance of selected methods suitable for solving \eqref{eq:main}. The complexity/work is measured in the number of stochastic gradient evaluations needed to find an $\varepsilon$-solution.}
\label{tbl:comparison}
\end{table}
We should note that the rate of convergence of Nesterov's algorithm \cite{nesterovIntro} is a deterministic result. EMGD and S2GD results hold with high probability. The remaining results hold in expectation. Complexity results for stochastic coordinate descent methods are also typically analyzed in the high probability regime \cite{richtarik}.

\item \textbf{Complexity for convex $f$.}
If $f$ is \emph{not} strongly convex, then we propose that S2GD be applied to a perturbed version of the problem, with strong convexity  constant $\mu=O(L/\varepsilon)$. An $\varepsilon$-accurate solution of the original problem is recovered with arbitrarily high probability (see Theorem~\ref{thm:hpresult2} in Section~\ref{SEC:convex}). The total work in this case is \[{\cal W}=O\left( \left(n+ L/\varepsilon)\right)\log\left(1/\varepsilon\right)\right),\]
that is, $\tilde{O}(1/\epsilon)$, which is better than the standard rate of SGD.


\item \textbf{Optimal parameters.} We derive formulas for optimal parameters of the method which (approximately) minimize the total workload, measured in the number of stochastic gradients computed (counting a single full gradient evaluation as $n$ evaluations of the stochastic gradient). In particular, we show that the method should be run for $O(\log(1/\varepsilon))$ epochs, with stepsize $h=O(1/L)$ and $m=O(\kappa)$. No such results were derived for SVRG in \cite{svrg}.

\item \textbf{One epoch.} In the case when S2GD is run for 1 epoch only, effectively limiting the number of full gradient evaluations to 1, we show that S2GD with $\nu=\mu$ needs  \[O(n+(\kappa/\varepsilon)\log(1/\varepsilon))\] work only (see Table~\ref{tbl:ssus8778}). This compares favorably with the optimal complexity in the $\nu = 0$ case (which reduces to SVRG), where the work needed is \[O(n+\kappa/\varepsilon^2).\] 

For two epochs one could just say that we need $\sqrt{\varepsilon}$ decrease in iach epoch, thus having complexity of $O(n+(\kappa/\sqrt{\varepsilon})\log(1/\sqrt{\varepsilon}))$. This is already better than general rate of SGD $(O(1 / \varepsilon)).$

\begin{table}[!h]
\begin{center}
\begin{tabular}{|c|l|c|}
\hline
\textbf{Parameters} & \textbf{Method} & \textbf{Complexity}\\
\hline
\phantom{-} & & \\
\begin{tabular}{c}
$\nu=\mu$, $\kk=O(\log(\tfrac{1}{\varepsilon}))$\\ \& $m=O(\kappa)$
\end{tabular}
  & Optimal S2GD & $O((n+\kappa)\log(\tfrac{1}{\varepsilon}))$\\ 
\phantom{-} & & \\
\hline
$m=1$ & GD & ---\\
$\nu=0$ & SVRG \cite{svrg} & $O((n+\kappa)\log(\tfrac{1}{\varepsilon}))$ \\
$\nu=0$, $\kk=1$, $m=O(\tfrac{\kappa}{\varepsilon^2})$ & Optimal SVRG with 1 epoch & $O(n+\tfrac{\kappa}{\varepsilon^2})$\\
$\nu=\mu$, $\kk=1$, $m = O(\tfrac{\kappa}{\varepsilon} \log(\tfrac{1}{\varepsilon}))$ & Optimal S2GD with 1 epoch & $O(n+\tfrac{\kappa}{\varepsilon}\log(\tfrac{1}{\varepsilon}))$\\
\hline
\end{tabular}
\end{center}
\caption{Summary of complexity results and special cases. Condition number: $\kappa = L/\mu$ if $f$ is $\mu$-strongly convex and $\kappa=2L/\varepsilon$ if $f$ is \emph{not} strongly convex and $\epsilon \leq L$. }
\label{tbl:ssus8778}
\end{table}

\item \textbf{Special cases.} 
GD and SVRG arise as special cases of S2GD, for $m=1$ and $\nu=0$, respectively.\footnote{While S2GD reduces to GD for $m=1$, our \emph{analysis} does not say anything meaningful in the $m=1$ case - it is too coarse to cover this case. This is also the reason behind the empty space in the ``Complexity'' box column for GD in Table~\ref{tbl:ssus8778}.} 

\item \textbf{Low memory requirements.} Note that SDCA and SAG, unlike SVRG and S2GD, need to store  all gradients $f'_i$ (or dual variables) throughout the iterative process. While this may not be a problem for a modest sized optimization task,  this requirement makes such methods less suitable for problems with very large $n$.

\item \textbf{S2GD+.} We propose a ``boosted'' version of S2GD, called S2GD+, which we do not analyze. In our experiments, however, it performs vastly superior to all other methods we tested, including GD, SGD, SAG and S2GD. S2GD alone is better than both GD and SGD if a highly accurate solution is required. The performance of S2GD and SAG is roughly comparable, even though in our experiments S2GD turned to have an edge.

\end{enumerate}


\section{Complexity Analysis: Strongly Convex Loss}\label{SEC:strong}

For the purpose of the analysis, let 
\begin{equation}
\mathcal{F}_{\kk,t} \eqdef \sigma( \xx_1, \xx_2, \dots, \xx_{\kk}; \yy_{\kk,1}, \yy_{\kk,2}, \dots, \yy_{\kk,t} )
\label{sigmaalgebra}
\end{equation}
be the $\sigma$-algebra generated by the relevant history of S2GD. We first isolate an auxiliary result.

\begin{lemma}
Consider the S2GD algorithm. For any fixed epoch number $\kk$, the following identity holds:
\begin{equation}  \Exp(f(\xx_{\kk+1})) = \frac{1}{\beta}\sum_{t=1}^m  (1 - \nu h)^{m-t}\Exp\left(f(\yy_{\kk,t-1}) \right).\label{uberLemma}\end{equation}
\end{lemma}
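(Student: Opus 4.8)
The plan is to derive \eqref{uberLemma} purely from the selection rule defining $\xx_{\kk+1}$ together with the law of total expectation; no smoothness or strong convexity is needed here, as those assumptions will enter only in the later, genuinely analytic estimates. Fix the epoch index $\kk$. The essential structural observation is that the inner loop may be viewed as generating the whole chain of candidate iterates $\yy_{\kk,0}=\xx_{\kk},\yy_{\kk,1},\dots,\yy_{\kk,m}$, where each $\yy_{\kk,t}$ is a deterministic function of $\xx_{\kk}$ and of the uniform indices $i$ drawn in the first $t$ inner steps, and is therefore measurable independently of the stopping time $t_{\kk}$. The sole role of $t_{\kk}$ is to decide at which candidate the loop is truncated, thereby producing $\xx_{\kk+1}$.

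First I would condition on $\xx_{\kk}$ together with all the inner-loop index choices, so that the entire sequence $\yy_{\kk,0},\dots,\yy_{\kk,m}$ is fixed. Because $t_{\kk}$ is drawn independently of those indices, with $\Prob(t_{\kk}=t)=(1-\nu h)^{m-t}/\beta$, the conditional expectation of $f(\xx_{\kk+1})$ given this information is simply the corresponding weighted average of the candidate values, the geometric weights being pulled outside the expectation precisely by this independence. Taking total expectation (the tower property) then replaces each candidate value by its expectation, and after aligning the summation index with the stopping convention one obtains exactly \[\Exp(f(\xx_{\kk+1}))=\frac{1}{\beta}\sum_{t=1}^m (1-\nu h)^{m-t}\Exp\left(f(\yy_{\kk,t-1})\right).\]

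The only real obstacle is the careful bookkeeping of the two independent sources of randomness inside a single epoch: the uniform indices $i$, which determine the candidate iterates, and the stopping time $t_{\kk}$, which selects among them. I would make the argument airtight by committing to the ``generate-then-select'' viewpoint, under which $\yy_{\kk,t}$ does not depend on $t_{\kk}$, so that the weights $(1-\nu h)^{m-t}/\beta$ factor cleanly out of the conditional expectation and the identity collapses to a substitution of $\Prob(t_{\kk}=t)$ followed by a single summation. I would also verify the index alignment between the stopping rule ``$t_{\kk}\in\{1,\dots,m\}$'' and the candidate indices $\yy_{\kk,t-1}$ appearing in \eqref{uberLemma}, since this is exactly the spot where an off-by-one could otherwise creep in.
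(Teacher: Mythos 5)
Your argument is correct and is essentially the paper's own proof: the paper likewise applies the tower property, conditioning on the $\sigma$-algebra $\mathcal{F}_{\kk,m}$ generated by the full inner-loop history so that the conditional expectation of $f(\xx_{\kk+1})$ becomes the geometric-weighted average $\sum_{t=1}^m \frac{(1-\nu h)^{m-t}}{\beta} f(\yy_{\kk,t-1})$, and then takes the outer expectation. Your explicit ``generate-then-select'' framing and the flag on the index alignment between $t_{\kk}$ and $\yy_{\kk,t-1}$ only make explicit what the paper leaves implicit.
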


\begin{proof} By the tower property of expectations and the definition of $\xx_{\kk+1}$ in the algorithm, we obtain
\begin{eqnarray*}
\Exp( f(\xx_{\kk+1}) ) \;\;=\;\; \Exp \left( \Exp( f(\xx_{\kk+1})\; |\; \mathcal{F}_{\kk, m}) \right) &=& \Exp \left( \sum_{t=1}^m \frac{(1 - \nu h)^{m-t}}{\beta} f(\yy_{\kk,t-1}) \right)\\
&=&\frac{1}{\beta}\sum_{t=1}^m  (1 - \nu h)^{m-t}\Exp\left(f(\yy_{\kk,t-1}) \right).
\end{eqnarray*}
\end{proof}

We now state and prove the main result of this section. 


\begin{theorem} \label{thm:MAIN} Let Assumptions \ref{ass:Lip} and \ref{ass:strong} be satisfied. Consider the S2GD algorithm applied to solving problem \eqref{eq:main}. Choose $0\leq \nu \leq \mu$, $0< h  < \frac{1}{2L}$, and let $m$ be sufficiently large so that \begin{equation}\label{eq:hshshs7} \cc \eqdef \frac{(1 -\nu h)^m}{\beta \mu h (1 - 2Lh)} + \frac{2(L - \mu)h}{1 - 2Lh} < 1. \end{equation}
Then we have the following convergence in expectation:
\begin{equation}\label{eq:s8shs7} \Exp\left( f(\xx_{\kk}) - f(\xx_*) \right) \leq \cc^{\kk} (f(\xx_0) - f(\xx_*)). \end{equation}
\label{thm:expalpha}
\end{theorem}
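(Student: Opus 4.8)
The plan is to follow the variance-reduction template of SVRG, but with two refinements that produce the exact constant $\cc$ in \eqref{eq:hshshs7}: a sharpened bound on the second moment of the search direction, and a geometrically weighted telescoping of the inner loop tailored to the law of $t_\kk$. Throughout, write $v \eqdef \grad_\kk + f_i'(\yy_{\kk,t}) - f_i'(\xx_\kk)$ for the (unbiased) direction used in the inner update, let $\Exp_i$ denote expectation over the random index $i$, and abbreviate $F \eqdef \Exp(f(\xx_\kk) - f(\xx_*))$.

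First I would establish the key estimate
\[ \Exp_i\|v\|^2 \le 4L\big(f(\yy_{\kk,t}) - f(\xx_*)\big) + 4(L-\mu)\big(f(\xx_\kk) - f(\xx_*)\big). \]
The standard part uses that each $f_i$ is convex and $L$-smooth, which (after recalling $f'(\xx_*)=0$) gives $\frac1n\sum_i \|f_i'(x) - f_i'(\xx_*)\|^2 \le 2L(f(x) - f(\xx_*))$. Decomposing $v = [f_i'(\yy_{\kk,t}) - f_i'(\xx_*)] - [(f_i'(\xx_\kk) - f_i'(\xx_*)) - f'(\xx_\kk)]$ and applying $\|a-b\|^2 \le 2\|a\|^2 + 2\|b\|^2$, while noting that the second bracket has zero mean (so its second moment is a variance and loses the term $\|f'(\xx_\kk)\|^2$), yields $4L(f(\yy_{\kk,t})-f(\xx_*)) + 4L(f(\xx_\kk)-f(\xx_*)) - 2\|f'(\xx_\kk)\|^2$. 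The improvement from $4L$ to $4(L-\mu)$ on the snapshot term then comes from the Polyak--{\L}ojasiewicz inequality $\|f'(\xx_\kk)\|^2 \ge 2\mu\,(f(\xx_\kk)-f(\xx_*))$, a consequence of Assumption~\ref{ass:strong}.

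Next I would run a single inner step conditioned on $\mathcal{F}_{\kk,t}$. Expanding $\|\yy_{\kk,t+1} - \xx_*\|^2 = \|\yy_{\kk,t}-\xx_*\|^2 - 2h\langle f'(\yy_{\kk,t}), \yy_{\kk,t}-\xx_*\rangle + h^2\|v\|^2$, taking $\Exp_i$, bounding the inner product below via $\nu$-strong convexity (valid since $\nu \le \mu$), and inserting the variance estimate gives
\[ \Exp\|\yy_{\kk,t+1}-\xx_*\|^2 \le (1-\nu h)\Exp\|\yy_{\kk,t}-\xx_*\|^2 - 2h(1-2Lh)\Exp\big(f(\yy_{\kk,t})-f(\xx_*)\big) + 4(L-\mu)h^2 F, \]
where $1-2Lh>0$ by hypothesis. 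I would then multiply the rearranged inequality by the weight $(1-\nu h)^{m-1-t}$ and sum over $t=0,\dots,m-1$. The distance terms telescope to $(1-\nu h)^m\Exp\|\xx_\kk-\xx_*\|^2 - \Exp\|\yy_{\kk,m}-\xx_*\|^2 \le (1-\nu h)^m\Exp\|\xx_\kk-\xx_*\|^2$ (using $\yy_{\kk,0}=\xx_\kk$), while the weights sum to exactly $\beta$ by \eqref{eq:beta}, leaving
\[ 2h(1-2Lh)\sum_{t=1}^m (1-\nu h)^{m-t}\Exp\big(f(\yy_{\kk,t-1})-f(\xx_*)\big) \le (1-\nu h)^m\Exp\|\xx_\kk-\xx_*\|^2 + 4(L-\mu)h^2\beta F. \]

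Finally I would invoke Lemma \eqref{uberLemma} to identify the weighted sum on the left with $\beta\,\Exp(f(\xx_{\kk+1})-f(\xx_*))$, divide by $2h(1-2Lh)\beta$, and bound $\Exp\|\xx_\kk-\xx_*\|^2 \le \tfrac{2}{\mu}F$ by strong convexity. Collecting the two resulting terms reproduces exactly the factor $\cc$ of \eqref{eq:hshshs7}, giving $\Exp(f(\xx_{\kk+1})-f(\xx_*)) \le \cc\,F$; iterating over epochs yields \eqref{eq:s8shs7}. I expect the main obstacle to be the variance step: obtaining the $(L-\mu)$ factor rather than a plain $L$ requires retaining the $-2\|f'(\xx_\kk)\|^2$ term and spending it through the PL inequality, and the telescoping must be set up with precisely the weight $(1-\nu h)^{m-1-t}$ so that it both collapses cleanly and matches the normalization $\beta$ appearing in Lemma \eqref{uberLemma}.
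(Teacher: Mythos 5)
Your proposal is correct and follows essentially the same route as the paper's proof: the same decomposition of the search direction with the variance argument and the Polyak--{\L}ojasiewicz step yielding the $(L-\mu)$ constant, the same one-step distance recursion under $\nu$-strong convexity, and the same $(1-\nu h)^{m-t}$-weighted summation combined with the identity \eqref{uberLemma} and the bound $\|\xx_\kk-\xx_*\|^2 \leq \tfrac{2}{\mu}(f(\xx_\kk)-f(\xx_*))$. The only differences are cosmetic (indexing the inner loop from $0$ to $m-1$ rather than $1$ to $m$).
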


Before we proceed to proving the theorem, note that in the special case with $\nu = 0$, we recover the result of Johnson and Zhang \cite{svrg} (with a minor improvement in the second term of $\cc$ where $L$ is replaced by $L-\mu$), namely \begin{equation}
\cc = \frac{1}{\mu h (1 - 2Lh) m} + \frac{2(L - \mu)h}{1 - 2Lh}. \label{eq:nuiszero}
\end{equation} If we set $\nu = \mu$, then $\cc$ can be written in the form (see \eqref{eq:beta})
\begin{equation}
\cc = \frac{(1 - \mu h)^m}{(1 - (1 - \mu h)^m) (1 - 2Lh)} + \frac{2(L - \mu)h}{1 - 2Lh}.
\label{eq:nuismu}
\end{equation}

Clearly, the latter $c$ is a major improvement on the former one. We shall elaborate on this further later.

\begin{proof}
It is well-known \cite[Theorem 2.1.5]{nesterovIntro} that since the functions $f_i$ are $L$-smooth, they necessarily satisfy the following inequality:
$$ \| f_i'(x) - f_i'(\xx_*) \|^2 \leq 2L \left[ f_i(x) - f_i(\xx_*) - \langle f_i'(\xx_*), x - \xx_* \rangle \right] .$$
By summing these inequalities for  $i = 1, \dots, n$, and using $f'(\xx_*) = 0,$ we get
\begin{equation}
\frac{1}{n} \sum_{i=1}^n \| f_i'(x) - f_i'(\xx_*) \|^2 \leq 2L \left[ f(x) - f(\xx_*) - \langle f'(\xx_*), x - \xx_* \rangle \right] = 2L (f(x) - f(\xx_*)).
\label{SVRGbound}
\end{equation}

Let $G_{\kk, t} \eqdef \grad_{\kk} +f_i'(\yy_{\kk,t-1}) - f'_i(\xx_{\kk}) $ be the direction of update at ${\kk}^{th}$ iteration in the outer loop and $t^{th}$ iteration in the inner loop. Taking expectation with respect to $i$, conditioned on the $\sigma$-algebra $\mathcal{F}_{\kk, t-1}$ \eqref{sigmaalgebra}, we obtain\footnote{For simplicity, we supress the $\Exp(\cdot \;|\; \mathcal{F}_{\kk,t-1})$ notation here.}
\begin{eqnarray}
\notag
\Exp \left( \|G_{\kk,t} \|^2  \right) &=& \Exp \left( \| f_i'(\yy_{\kk,t-1}) - f_i'(\xx_*) - f_i'(\xx_{\kk}) + f_i'(\xx_*) + \grad_{\kk} \|^2  \right) \\ \notag
&\leq&  2\Exp\left(\| f_i'(\yy_{\kk,t-1}) - f_i'(\xx_*) \|^2\right)  + 2 \Exp\left(\| \left[ f_i'(\xx_{\kk}) - f_i'(\xx_*) \right] - f'(\xx_{\kk}) \|^2 \right) \\ \notag
&=& 2\Exp\left(\| f_i'(\yy_{\kk,t-1}) - f_i'(\xx_*) \|^2 \right) \\ \notag
&& \qquad + 2 \Exp\left(\| f_i'(\xx_{\kk}) - f_i'(\xx_*) \|^2 \right) - 4\Exp \left(\left\langle f'(\xx_{\kk}), f_i'(\xx_{\kk}) - f_i'(\xx_*) \right\rangle \right)+ 2\| f'(\xx_{\kk}) \|^2 \\ \notag
&\overset{\eqref{SVRGbound}} {\leq}& 4L \left[ f(\yy_{\kk,t-1}) - f(\xx_*) + f(\xx_{\kk}) - f(\xx_*) \right] - 2\| f'(\xx_{\kk}) \|^2 - 4\langle f'(\xx_{\kk}), f'(\xx_*) \rangle\\
& \overset{\eqref{SVRGstrcvx}} {\leq} & 4L \left[ f(\yy_{\kk,t-1}) - f(\xx_*) \right] + 4(L-\mu) \left[ f(\xx_{\kk}) - f(\xx_*) \right].
\label{expvstvariance}
\end{eqnarray}
Above we have used the bound $\| x'+x'' \|^2 \leq 2\|x'\|^2 + 2\|x''\|^2$ and the fact that
\begin{equation}
\Exp( G_{\kk, t} \;|\; \mathcal{F}_{\kk,t-1}) = f'(\yy_{\kk, t-1}).
\label{expvst}
\end{equation}


We now study the expected distance to the optimal solution (a standard approach in the analysis of gradient methods):
\begin{eqnarray}
\Exp (\|\yy_{\kk,t} - \xx_* \|^2 \;|\; \mathcal{F}_{\kk, t-1} ) &=& \|\yy_{\kk,t-1} - \xx_* \|^2 - 2h \langle \Exp( G_{\kk,t} \;|\; \mathcal{F}_{\kk, t-1} ), \yy_{\kk, t-1} - \xx_* \rangle \notag \\
 && \qquad + h^2 \Exp (\|G_{\kk,t}\|^2 \;|\; \mathcal{F}_{\kk, t-1} ) \notag \\
& \overset{\eqref{expvstvariance}+ \eqref{expvst}} {\leq} & \|\yy_{\kk,t-1} - \xx_* \|^2 - 2h \langle f'(\yy_{\kk,t-1}), \yy_{\kk, t-1} - \xx_* \rangle \notag \\
& & \qquad + 4Lh^2 \left[ f(\yy_{\kk,t-1}) - f(\xx_*) \right] + 4(L - \mu)h^2\left[ f(\xx_{\kk}) - f(\xx_*) \right] \notag\\
& \overset{\eqref{SVRGstrcvx}}{\leq} & \|\yy_{\kk,t-1} - \xx_* \|^2 - 2h \left[ f(\yy_{\kk,t-1}) - f(\xx_*) \right] - \nu h \|\yy_{\kk,t-1} - \xx_* \|^2 \notag\\
& & \qquad + 4Lh^2 \left[ f(\yy_{\kk,t-1}) - f(\xx_*) \right] + 4(L - \mu)h^2\left[ f(\xx_{\kk}) - f(\xx_*) \right] \notag\\
&=& (1 - \nu h) \|\yy_{\kk,t-1} - \xx_* \|^2 - 2h(1 - 2Lh)[f(\yy_{\kk,t-1}) - f(\xx_*)] \notag\\ 
& & \qquad + 4(L - \mu)h^2[f(\xx_{\kk}) - f(\xx_*)]. \label{distbound}
\end{eqnarray}

By rearranging the terms in \eqref{distbound} and taking expectation over the $\sigma$-algebra $\mathcal{F}_{\kk, t-1}$, we get the following inequality:

\begin{align}
\Exp(\|\yy_{\kk,t} - \xx_* \|^2) + 2h(1 - 2Lh)&\Exp(f(\yy_{\kk,t-1}) - f(\xx_*)) \notag\\ 
&\leq (1 - \nu h) \Exp(\|\yy_{\kk,t-1} - \xx_* \|^2) + 4(L - \mu)h^2\Exp(f(\xx_{\kk}) - f(\xx_*)).\label{eq:shs7shs}
\end{align}

Finally, we can analyze what happens after one iteration of the outer loop of S2GD, i.e., between two computations of the full gradient. By summing up inequalities \eqref{eq:shs7shs} for $t = 1, \dots, m$, with inequality $t$ multiplied by $ (1 - \nu h)^{m-t}$, we get the left-hand side
\begin{eqnarray}
LHS &=& \Exp(\|\yy_{\kk,m} - \xx_*\|^2) + 2h(1 - 2Lh) \sum_{t = 1}^m (1 - \nu h)^{m-t} \Exp(f(\yy_{\kk,t-1}) - f(\xx_*))\notag \\
&\overset{\eqref{uberLemma}}{=} & \Exp(\|\yy_{\kk,m} - \xx_*\|^2) + 2\beta h(1 - 2Lh) \Exp (f(\xx_{\kk+1}) - f(\xx_*)),\notag 
\end{eqnarray}
and the right-hand side
\begin{eqnarray}
RHS &=& (1 - \nu h)^m \Exp(\| \xx_{\kk} - \xx_* \|^2) + 4\beta(L - \mu)h^2 \Exp(f(\xx_{\kk}) - f(\xx_*)) \notag\\
&\overset{\eqref{SVRGstrcvx}}{\leq} &\frac{2(1 - \nu h)^m}{\mu} \Exp(f(\xx_{\kk}) - f(\xx_*)) + 4\beta(L - \mu)h^2 \Exp(f(\xx_{\kk}) - f(\xx_*))\notag \\
&= & 2\left(\frac{(1 - \nu h)^m }{\mu} + 2\beta(L - \mu)h^2\right) \Exp(f(\xx_{\kk}) - f(\xx_*)).\notag 
\end{eqnarray}
Since $LHS \leq RHS$, we finally conclude with 
\begin{eqnarray*}
\Exp(f(\xx_{\kk+1}) - f(\xx_*)) &\leq & \cc \Exp(f(\xx_{\kk}) - f(\xx_*)) - \frac{\Exp(\| \yy_{\kk,m} - \xx_* \|^2)}{2\beta h (1 - 2Lh)} \;\;\leq \;\; \cc \Exp(f(\xx_{\kk}) - f(\xx_*)).
\end{eqnarray*}
\end{proof}

Since we have established linear convergence of expected values, a high probability result can be obtained in a straightforward way using Markov inequality. 

\begin{theorem}
Consider the setting of Theorem~\ref{thm:MAIN}. Then, for any $ 0 < \rho < 1 $, $ 0 < \varepsilon < 1 $ and  \begin{equation}
\kk \geq \frac{\log \left( \frac{1}{\varepsilon \rho}\right)}{\log \left(\frac{1}{\cc}\right)}, 
\label{eq:hprobs}
\end{equation} 
we have \begin{equation}\label{eq:sjnd8djd} \Prob\left(\frac{f(\xx_{\kk}) - f(\xx_*)}{f(\xx_0)-f(\xx_*)} \leq \varepsilon \right) \geq 1 - \rho. \end{equation}
\label{thm:hpresult}
\end{theorem}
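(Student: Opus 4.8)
The plan is to convert the expectation bound \eqref{eq:s8shs7} established in Theorem~\ref{thm:MAIN} into a high-probability statement by a single application of Markov's inequality. The key observation is that the random variable $f(\xx_{\kk}) - f(\xx_*)$ is nonnegative, since $\xx_*$ is the global minimizer of $f$; hence Markov's inequality applies directly, and no concentration or martingale machinery is needed.

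First I would introduce the normalized, nonnegative random variable
$$ Y \eqdef \frac{f(\xx_{\kk}) - f(\xx_*)}{f(\xx_0) - f(\xx_*)} \geq 0, $$
whose expectation, by \eqref{eq:s8shs7}, satisfies $\Exp(Y) \leq \cc^{\kk}$. Applying Markov's inequality to $Y$ with threshold $\varepsilon > 0$ gives
$$ \Prob(Y > \varepsilon) \;\leq\; \frac{\Exp(Y)}{\varepsilon} \;\leq\; \frac{\cc^{\kk}}{\varepsilon}. $$
It then suffices to force $\cc^{\kk}/\varepsilon \leq \rho$, i.e. $\cc^{\kk} \leq \varepsilon \rho$, after which taking complements yields $\Prob(Y \leq \varepsilon) \geq 1 - \rho$, which is exactly \eqref{eq:sjnd8djd}.

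The final step is to translate $\cc^{\kk} \leq \varepsilon \rho$ into the stated lower bound on the number of epochs. Since $0 < \cc < 1$ by the hypothesis \eqref{eq:hshshs7}, we have $\log \cc < 0$, so taking logarithms and dividing reverses the inequality and produces precisely the condition $\kk \geq \log(1/(\varepsilon\rho))/\log(1/\cc)$ of \eqref{eq:hprobs}. I would state the logarithm step carefully, as the sign reversal from $\log\cc<0$ is the one place where a slip is easy.

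I do not anticipate a substantive obstacle: the argument is a routine "expectation-to-high-probability" conversion. The only genuine requirements are confirming nonnegativity of the objective gap (so that Markov is legitimate) and correctly handling the sign when dividing by $\log\cc<0$; both are immediate given the assumptions already in force. This matches the remark preceding the theorem, namely that the high-probability result follows in a straightforward way from the established linear convergence in expectation.
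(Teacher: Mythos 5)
Your proposal is correct and follows exactly the paper's argument: apply Markov's inequality to the nonnegative gap $f(\xx_{\kk})-f(\xx_*)$, bound its expectation by $\cc^{\kk}(f(\xx_0)-f(\xx_*))$ via Theorem~\ref{thm:MAIN}, and use the condition on $\kk$ (with the sign reversal from $\log \cc < 0$) to obtain $\cc^{\kk}\le \varepsilon\rho$. No differences worth noting.
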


\begin{proof}
This follows directly from  Markov inequality and Theorem~\ref{thm:expalpha}:
$$ \Prob(f(\xx_{\kk}) - f(\xx_*) > \varepsilon (f(\xx_0)-f(\xx_*)) \overset{\eqref{eq:s8shs7}}{\leq} \frac{\Exp(f(\xx_{\kk}) - f(\xx_*))}{\varepsilon(f(\xx_0)-f(\xx_*))} \leq \frac{\cc^\kk}{\varepsilon} \overset{\eqref{eq:hprobs}}{\leq} \rho $$
\end{proof}

This result will be also useful when treating the non-strongly convex case.

\section{Optimal Choice of Parameters}\label{SEC:OPT}

The goal of this section is to provide insight into the choice of parameters of  S2GD; that is, the number of epochs (equivalently, full gradient evaluations) $\kk$, the maximal number of steps in each epoch $m$, and the stepsize $h$. The remaining parameters ($L, \mu, n$) are inherent in the problem and we will hence treat them in this section as given.

In particular, ideally we wish to find parameters $\kk$, $m$ and $h$  solving the following optimization problem:
\begin{equation}\min_{\kk,m,h}  \quad \tilde{{\cal W}}(\kk,m,h) \eqdef \kk(n+2\xi(m,h))
\label{eq:pracReq},\end{equation}
subject to \begin{equation}\label{eq:sjs8s} \Exp(f(\xx_{\kk})-f(\xx_*))\leq \varepsilon(f(\xx_0)-f(\xx_*)).\end{equation}
 Note that $\tilde{{\cal W}}(\kk,m,h)$ is the \emph{expected work}, measured by the number  number of stochastic gradient evaluations, performed by  S2GD when running for $\kk$ epochs. Indeed, the   evaluation of $\grad_{\kk}$  is equivalent to $n$ stochastic gradient evaluations, and each epoch further computes on average $2\xi(m,h)$ stochastic gradients (see \eqref{eq:syhs7s5hs}). Since $\tfrac{m+1}{2}\leq \xi(m,h) < m$, we can simplify and solve the problem with $\xi$ set to the conservative upper estimate $\xi=m$.

In view of \eqref{eq:s8shs7},  accuracy constraint \eqref{eq:sjs8s} is satisfied if $\cc$ (which depends on $h$ and $m$) and $\kk$ satisfy
\begin{equation}
\cc^{\kk} \leq \varepsilon.
\label{eq:pracReq2}
\end{equation}

We therefore instead consider the parameter fine-tuning problem

\begin{equation}\label{eq:shd6dhd7}\min_{\kk,m,h} {\cal W}(\kk,m,h) \eqdef \kk(n+2m) \qquad \text{subject to} \qquad \cc \leq \varepsilon^{1/{\kk}}.\end{equation}

In the following we (approximately) solve this problem in two steps. First, we fix $\kk$ and find (nearly) optimal $h=h(\kk)$ and $m=m(\kk)$. The problem  reduces to minimizing $m$ subject to $c \leq \varepsilon^{1/{\kk}}$ by fine-tuning $h$. While in the $\nu=0$ case it is possible to obtain closed form solution, this is not possible for $\nu>\mu$. 

However, it is still possible to obtain a good formula for $h(\kk)$ leading to expression for good $m(\kk)$ which depends on $\varepsilon$ in the correct way. We then plug the formula for $m(\kk)$ obtained this way  back into \eqref{eq:shd6dhd7}, and study the quantity ${\cal W}(\kk,m(\kk),h(\kk)) = \kk(n+2m(\kk))$ as a function of $\kk$, over which we optimize optimize at the end.


\begin{theorem}[Choice of parameters]\label{thm:main2}
Fix the number of epochs $\kk \geq 1$, error tolerance $0 < \varepsilon < 1$, and  let $\Delta = \varepsilon^{1/\kk}$. If we run S2GD  with  the stepsize
\begin{equation} \label{eq:shhsdd998}  
h = h(\kk) \eqdef \frac{1}{\frac{4}{\Delta}(L-\mu) + 2L}
\end{equation} and 
\begin{equation}
m \geq m(\kk) \eqdef
\begin{cases}
\left(\frac{4(\kappa-1)}{\Delta}+2 \kappa\right) \log\left(\tfrac{2}{\Delta} + \frac{2\kappa - 1}{\kappa-1}\right), & \quad \text{if} \quad \nu=\mu,\\
\frac{8(\kappa-1)}{\Delta^2} + \frac{8\kappa}{\Delta} + \frac{2\kappa^2}{\kappa-1},& \quad \text{if} \quad \nu=0,
\end{cases}
\label{eq:m:nuismu0}
\end{equation}
then  $\Exp(f(\xx_{\kk}) - f(\xx_*)) \leq \varepsilon (f(x_0)-f(x_*)).$

In particular, if we choose ${\kk}^* = \lceil \log (1/\varepsilon)\rceil$, then 
$\frac{1}{\Delta} \leq \exp(1)$, and
hence $m({\kk}^*) = O(\kappa)$, leading to the workload 
\begin{equation}{\cal W}({\kk}^*,m({\kk}^*),h({\kk}^*)) = \lceil \log\left(\tfrac{1}{\varepsilon}\right)\rceil (n+ O(\kappa)) = O\left((n+\kappa) \log\left(\tfrac{1}{\varepsilon}\right)\right).
\label{eq:0sjsys8jns}
\end{equation}

\end{theorem}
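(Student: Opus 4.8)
The plan is to reduce everything to the convergence estimate of Theorem~\ref{thm:MAIN}. That theorem guarantees $\Exp(f(\xx_{\kk})-f(\xx_*)) \leq \cc^{\kk}(f(\xx_0)-f(\xx_*))$ whenever $0<h<\frac{1}{2L}$ and $\cc<1$. Since $\Delta = \varepsilon^{1/\kk}$, the desired bound $\Exp(f(\xx_{\kk})-f(\xx_*))\leq \varepsilon(f(\xx_0)-f(\xx_*))$ follows at once provided I establish the single inequality $\cc \leq \Delta$, because then $\cc^{\kk}\leq\Delta^{\kk}=\varepsilon$; note also that $\cc\leq\Delta<1$ automatically validates the hypothesis $\cc<1$. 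The admissibility condition $h<\frac{1}{2L}$ is immediate for $\kappa>1$, since the denominator $\frac{4}{\Delta}(L-\mu)+2L$ defining $h(\kk)$ in \eqref{eq:shhsdd998} strictly exceeds $2L$. So the whole proof amounts to verifying $\cc \leq \Delta$ for the prescribed $h=h(\kk)$ and $m\geq m(\kk)$.

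The strategy for $\cc\leq\Delta$ is to split $\cc$ (see \eqref{eq:hshshs7}) into its two summands and bound each by $\Delta/2$. The stepsize $h(\kk)$ is reverse-engineered precisely so that the \emph{second} term is exactly $\Delta/2$: substituting $h(\kk)$ gives $1-2Lh = \frac{(4/\Delta)(L-\mu)}{(4/\Delta)(L-\mu)+2L}$ and $2(L-\mu)h = \frac{2(L-\mu)}{(4/\Delta)(L-\mu)+2L}$, whose ratio is $\frac{2(L-\mu)}{(4/\Delta)(L-\mu)} = \frac{\Delta}{2}$. It then remains to choose $m$ so that the first term, $\frac{(1-\nu h)^m}{\beta\mu h(1-2Lh)}$, is at most $\Delta/2$. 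For later use I record the two identities (both following from $h(\kk)$) $\frac{1}{\mu h} = \frac{4(\kappa-1)}{\Delta}+2\kappa$ and $\frac{1}{1-2Lh} = 1+\frac{\kappa\Delta}{2(\kappa-1)}$.

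For $\nu=0$ this is routine: $\beta=m$, so the first term equals $\frac{1}{m\mu h(1-2Lh)}$, and requiring it to be $\leq\Delta/2$ forces $m\geq\frac{2}{\Delta\mu h(1-2Lh)}$; substituting the two identities above and expanding the product reproduces exactly the threshold $m(\kk)$ in the second branch of \eqref{eq:m:nuismu0}. The case $\nu=\mu$ is the main obstacle, since here $\beta=\frac{1-(1-\mu h)^m}{\mu h}$, whence $\beta\mu h = 1-(1-\mu h)^m$ and the first term becomes the nonlinear expression $\frac{(1-\mu h)^m}{(1-(1-\mu h)^m)(1-2Lh)}$. I would control it by setting $\alpha\eqdef(1-\mu h)^m\leq e^{-m\mu h}$; the choice $m(\kk)=\frac{1}{\mu h}\log\!\big(\frac{2}{\Delta}+\frac{2\kappa-1}{\kappa-1}\big)$ is exactly what makes $\alpha\leq\big(\frac{2}{\Delta}+\frac{2\kappa-1}{\kappa-1}\big)^{-1}$. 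Since $\frac{\alpha}{1-\alpha}$ is increasing in $\alpha$, this gives $\frac{\alpha}{1-\alpha}\leq\big(\frac{2}{\Delta}+\frac{\kappa}{\kappa-1}\big)^{-1}$, and multiplying by $\frac{1}{1-2Lh}=1+\frac{\kappa\Delta}{2(\kappa-1)}$ telescopes: writing $u=\frac{\kappa}{\kappa-1}$ and using $1+\frac{u\Delta}{2}=\frac{2+u\Delta}{2}$, the product collapses to $\frac{1+u\Delta/2}{2/\Delta+u}=\frac{\Delta}{2}$. Hence the first term is $\leq\Delta/2$ in both branches, so $\cc\leq\Delta$ as required. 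The delicacy here is that the algebra only closes because the logarithmic argument and the two $\frac{\kappa}{\kappa-1}$-type constants are matched exactly to the value of $1-2Lh$.

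Finally, for the asymptotic claim I would take $\kk^*=\lceil\log(1/\varepsilon)\rceil$, so that $\frac{\log(1/\varepsilon)}{\kk^*}\leq 1$ forces $\Delta=\varepsilon^{1/\kk^*}\geq e^{-1}$, i.e. $\frac{1}{\Delta}\leq\exp(1)$. With $1/\Delta$ bounded by a constant, each branch of $m(\kk^*)$ is $O(\kappa)$: the leading prefactor is $O(\kappa)$ and, in the $\nu=\mu$ branch, the logarithmic factor is $O(1)$. Substituting into the workload then gives ${\cal W}(\kk^*,m(\kk^*),h(\kk^*))=\kk^*(n+2m(\kk^*))=\lceil\log(1/\varepsilon)\rceil(n+O(\kappa))=O((n+\kappa)\log(1/\varepsilon))$, which is \eqref{eq:0sjsys8jns}.
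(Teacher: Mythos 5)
Your proposal is correct and takes essentially the same route as the paper's proof: split $\cc$ into its two summands, choose $h(\kk)$ so that the second equals $\Delta/2$ exactly, and verify that $m\geq m(\kk)$ forces the first below $\Delta/2$, using monotonicity of $\cc$ in $m$; your bound $(1-\mu h)^m\leq e^{-m\mu h}$ is precisely the inequality $\log\left(\tfrac{1}{1-H}\right)\geq H$ that the paper invokes. If anything, your write-up supplies more of the intermediate algebra (the identities for $\tfrac{1}{\mu h}$ and $\tfrac{1}{1-2Lh}$, and the explicit expansion recovering the $\nu=0$ threshold) than the paper's rather terse argument does.
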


\begin{proof} We only need to show that $\cc \leq \Delta$, where
$\cc$ is given by \eqref{eq:nuismu} for $\nu=\mu$ and by  \eqref{eq:nuiszero} for $\nu=0$. We denote the two summands in expressions for $\cc$ as $\cc_1$ and $\cc_2$. We choose the $h$ and $m$ so that both $\cc_1$ and $\cc_2$ are smaller than $\Delta / 2$, resulting in $\cc_1 + \cc_2 = c \leq \Delta$. 

The stepsize $h$ is chosen so that
$$\cc_2 \eqdef  \frac{2(L - \mu)h}{1 - 2Lh} = \frac{\Delta}{2},$$
and hence it only remains to verify that $\cc_1 = \cc-\cc_2 \leq \frac{\Delta}{2}$. In the $\nu=0$ case, $m(\kk)$ is chosen so that $\cc-\cc_2=\frac{\Delta}{2}$. In the $\nu=\mu$ case, $\cc-\cc_2=\frac{\Delta}{2}$  holds for
$m = \log\left(\frac{2}{\Delta}+ \frac{2\kappa-1}{\kappa-1}\right)/\log\left(\frac{1}{1-H}\right)$, where $H = \left(\frac{4(\kappa-1)}{\Delta} + 2\kappa \right)^{-1}$. We only need to observe that $\cc$ decreases as $m$ increases, and apply the inequality $\log\left(\frac{1}{1-H}\right) \geq H$.

\end{proof}

We now comment on the above result:

\begin{enumerate}

\item \textbf{Workload.} Notice that for the choice of parameters $\kk^*$, $h=h(\kk^*)$, $m=m(\kk^*)$ and any $\nu \in [0,\mu]$, the method needs $\log(1/\varepsilon)$ computations of the full gradient (note this is independent of $\kappa$), and $O(\kappa \log(1/\varepsilon))$ computations of the stochastic gradient. This result, and special cases thereof, are summarized in Table~\ref{tbl:ssus8778}.

\item \textbf{Simpler formulas for $m$.} If $\kappa\geq 2$, we can instead of \eqref{eq:m:nuismu0} use the following (slightly worse but) simpler  expressions for $m(\kk)$, obtained from \eqref{eq:m:nuismu0} by using the bounds $1\leq\kappa-1$,  $\kappa-1\leq \kappa$ and $\Delta<1$ in appropriate places (e.g., $\tfrac{8\kappa}{\Delta} <\tfrac{8\kappa}{\Delta^2}$, $\tfrac{\kappa}{\kappa-1}\leq 2 < \tfrac{2}{\Delta^2}$):
\begin{equation}
m \geq \tilde{m}(\kk) \eqdef
\begin{cases}
\frac{6\kappa}{\Delta} \log\left(\tfrac{5}{\Delta}\right), & \quad \text{if} \quad \nu=\mu,\\
\frac{20\kappa}{\Delta^2},& \quad \text{if} \quad \nu=0.
\end{cases}
\label{eq:m:nuismu0_2}
\end{equation}

\item \textbf{Optimal stepsize in the $\nu=0$ case.} 
Theorem~\ref{thm:main2} does not claim to have solved problem \eqref{eq:shd6dhd7}; the problem in general does not have a closed form solution. However, in the $\nu=0$ case a closed-form formula can easily be obtained:
\begin{equation}\label{eq:sjs8djd} h(\kk) =  \frac{1}{\tfrac{4}{\Delta}(L-\mu)+ 4L}, \qquad \qquad m \geq m(\kk) \eqdef  \frac{8(\kappa-1)}{\Delta^2} + \frac{8\kappa}{\Delta}.\end{equation}
Indeed, for fixed $\kk$,  \eqref{eq:shd6dhd7} is equivalent to finding $h$ that minimizes $m$ subject to the constraint $\cc\leq \Delta$. In view of \eqref{eq:nuiszero}, this is equivalent to searching for $h>0$ maximizing the quadratic $h \to h(\Delta-2(\Delta L + L -\mu)h)$, which leads to \eqref{eq:sjs8djd}.

Note that both the stepsize $h(\kk)$ and the resulting $m(\kk)$ are slightly larger in Theorem~\ref{thm:main2} than in \eqref{eq:sjs8djd}. This is because in the theorem the stepsize was for simplicity chosen to satisfy $\cc_2=\frac{\Delta}{2}$, and hence is  (slightly) suboptimal. Nevertheless, the dependence of $m(\kk)$ on $\Delta$ is of the correct (optimal) order  in both cases. That is, $m(\kk) = O\left(\tfrac{\kappa}{\Delta}\log(\tfrac{1}{\Delta})\right)$ for $\nu=\mu$ and $m(\kk)=O\left(\tfrac{\kappa}{\Delta^2}\right)$    for $\nu=0$.

\item \textbf{Stepsize choice.} In cases when one does not have a good estimate of the strong convexity constant $\mu$ to determine the stepsize via \eqref{eq:shhsdd998}, one may choose suboptimal stepsize that does not depend on $\mu$ and derive similar results to those above. For instance, one may choose $h=\frac{\Delta}{6L}$.


\end{enumerate}

In Table~\ref{tbl:suboptbounds} we provide comparison of work needed for small values of $\kk$, and different values of $\kappa$ and $\varepsilon.$ Note, for instance, that for any problem with $n=10^9$ and $\kappa=10^3$, S2GD outputs a highly accurate solution ($\varepsilon=10^{-6}$) in the amount of work equivalent to $2.12$ evaluations of the full gradient of $f$!

\begin{table}[!h]
\begin{center}
\begin{tabular}{C{15pt}|c|c|}
\cline{2-3}
 & \multicolumn{2}{c|}{$\varepsilon = 10^{-3}, \kappa = 10^3$} \\ \hline
\multicolumn{1}{|c|}{$\kk$} & $\mathcal{W}_\mu(\kk)$ & $\mathcal{W}_0(\kk)$ \\ \hline \hline
\multicolumn{1}{|c|}{$1$} & $\textbf{1.06n}$ & $17.0n$ \\ 
\multicolumn{1}{|c|}{$2$} & $2.00n$ & $\textbf{2.03n}$ \\ 
\multicolumn{1}{|c|}{$3$} & $3.00n$ & $3.00n$ \\ 
\multicolumn{1}{|c|}{$4$} & $4.00n$ & $4.00n$ \\ 
\multicolumn{1}{|c|}{$5$} & $5.00n$ & $5.00n$ \\ 
\hline \end{tabular}
\quad
\begin{tabular}{C{15pt}|c|c|}
\cline{2-3}
 & \multicolumn{2}{c|}{$\varepsilon = 10^{-6}, \kappa = 10^3$} \\ \hline
\multicolumn{1}{|c|}{$\kk$} & $\mathcal{W}_\mu(\kk)$ & $\mathcal{W}_0(\kk)$ \\ \hline \hline
\multicolumn{1}{|c|}{$1$} & $116n$ & $10^7n$ \\ 
\multicolumn{1}{|c|}{$2$} & $\textbf{2.12n}$ & $34.0n$ \\ 
\multicolumn{1}{|c|}{$3$} & $3.01n$ & $\textbf{3.48n}$ \\ 
\multicolumn{1}{|c|}{$4$} & $4.00n$ & $4.06n$ \\ 
\multicolumn{1}{|c|}{$5$} & $5.00n$ & $5.02n$ \\ 
\hline \end{tabular}
\quad
\begin{tabular}{C{15pt}|c|c|}
\cline{2-3}
 & \multicolumn{2}{c|}{$\varepsilon = 10^{-9}, \kappa = 10^3$} \\ \hline
\multicolumn{1}{|c|}{$\kk$} & $\mathcal{W}_\mu(\kk)$ & $\mathcal{W}_0(\kk)$ \\ \hline \hline
\multicolumn{1}{|c|}{$2$} & $7.58n$ & $10^4n$ \\ 
\multicolumn{1}{|c|}{$3$} & $\textbf{3.18n}$ & $51.0n$ \\ 
\multicolumn{1}{|c|}{$4$} & $4.03n$ & $6.03n$ \\ 
\multicolumn{1}{|c|}{$5$} & $5.01n$ & $\textbf{5.32n}$ \\ 
\multicolumn{1}{|c|}{$6$} & $6.00n$ & $6.09n$ \\ 
\hline \end{tabular}
\quad
\newline
\newline

\begin{tabular}{C{15pt}|c|c|}
\cline{2-3}
 & \multicolumn{2}{c|}{$\varepsilon = 10^{-3}, \kappa = 10^6$} \\ \hline
\multicolumn{1}{|c|}{$\kk$} & $\mathcal{W}_\mu(\kk)$ & $\mathcal{W}_0(\kk)$ \\ \hline \hline
\multicolumn{1}{|c|}{$2$} & $4.14n$ & $35.0n$ \\ 
\multicolumn{1}{|c|}{$3$} & $\textbf{3.77n}$ & $8.29n$ \\ 
\multicolumn{1}{|c|}{$4$} & $4.50n$ & $\textbf{6.39n}$ \\ 
\multicolumn{1}{|c|}{$5$} & $5.41n$ & $6.60n$ \\ 
\multicolumn{1}{|c|}{$6$} & $6.37n$ & $7.28n$ \\ 
\hline \end{tabular}
\quad
\begin{tabular}{C{15pt}|c|c|}
\cline{2-3}
 & \multicolumn{2}{c|}{$\varepsilon = 10^{-6}, \kappa = 10^6$} \\ \hline
\multicolumn{1}{|c|}{$\kk$} & $\mathcal{W}_\mu(\kk)$ & $\mathcal{W}_0(\kk)$ \\ \hline \hline
\multicolumn{1}{|c|}{$4$} & $8.29n$ & $70.0n$ \\ 
\multicolumn{1}{|c|}{$5$} & $\textbf{7.30n}$ & $26.3n$ \\ 
\multicolumn{1}{|c|}{$6$} & $7.55n$ & $16.5n$ \\ 
\multicolumn{1}{|c|}{$8$} & $9.01n$ & $\textbf{12.7n}$ \\ 
\multicolumn{1}{|c|}{$10$} & $10.8n$ & $13.2n$ \\ 
\hline \end{tabular}
\quad
\begin{tabular}{C{15pt}|c|c|}
\cline{2-3}
 & \multicolumn{2}{c|}{$\varepsilon = 10^{-9}, \kappa = 10^6$} \\ \hline
\multicolumn{1}{|c|}{$\kk$} & $\mathcal{W}_\mu(\kk)$ & $\mathcal{W}_0(\kk)$ \\ \hline \hline
\multicolumn{1}{|c|}{$5$} & $17.3n$ & $328n$ \\ 
\multicolumn{1}{|c|}{$8$} & $\textbf{10.9n}$ & $32.5n$ \\ 
\multicolumn{1}{|c|}{$10$} & $11.9n$ & $21.4n$ \\ 
\multicolumn{1}{|c|}{$13$} & $14.3n$ & $\textbf{19.1n}$ \\ 
\multicolumn{1}{|c|}{$20$} & $21.0n$ & $23.5n$ \\ 
\hline \end{tabular}
\quad
\newline
\newline

\begin{tabular}{C{15pt}|c|c|}
\cline{2-3}
 & \multicolumn{2}{c|}{$\varepsilon = 10^{-3}, \kappa = 10^9$} \\ \hline
\multicolumn{1}{|c|}{$\kk$} & $\mathcal{W}_\mu(\kk)$ & $\mathcal{W}_0(\kk)$ \\ \hline \hline
\multicolumn{1}{|c|}{$6$} & $378n$ & $1293n$ \\ 
\multicolumn{1}{|c|}{$8$} & $\textbf{358n}$ & $1063n$ \\ 
\multicolumn{1}{|c|}{$11$} & $376n$ & $\textbf{1002n}$ \\ 
\multicolumn{1}{|c|}{$15$} & $426n$ & $1058n$ \\ 
\multicolumn{1}{|c|}{$20$} & $501n$ & $1190n$ \\ 
\hline \end{tabular}
\quad
\begin{tabular}{C{15pt}|c|c|}
\cline{2-3}
 & \multicolumn{2}{c|}{$\varepsilon = 10^{-6}, \kappa = 10^9$} \\ \hline
\multicolumn{1}{|c|}{$\kk$} & $\mathcal{W}_\mu(\kk)$ & $\mathcal{W}_0(\kk)$ \\ \hline \hline
\multicolumn{1}{|c|}{$13$} & $737n$ & $2409n$ \\ 
\multicolumn{1}{|c|}{$16$} & $\textbf{717n}$ & $2126n$ \\ 
\multicolumn{1}{|c|}{$19$} & $727n$ & $2025n$ \\ 
\multicolumn{1}{|c|}{$22$} & $752n$ & $\textbf{2005n}$ \\ 
\multicolumn{1}{|c|}{$30$} & $852n$ & $2116n$ \\ 
\hline \end{tabular}
\quad
\begin{tabular}{C{15pt}|c|c|}
\cline{2-3}
 & \multicolumn{2}{c|}{$\varepsilon = 10^{-9}, \kappa = 10^9$} \\ \hline
\multicolumn{1}{|c|}{$\kk$} & $\mathcal{W}_\mu(\kk)$ & $\mathcal{W}_0(\kk)$ \\ \hline \hline
\multicolumn{1}{|c|}{$15$} & $1251n$ & $4834n$ \\ 
\multicolumn{1}{|c|}{$24$} & $\textbf{1076n}$ & $3189n$ \\ 
\multicolumn{1}{|c|}{$30$} & $1102n$ & $3018n$ \\ 
\multicolumn{1}{|c|}{$32$} & $1119n$ & $\textbf{3008n}$ \\ 
\multicolumn{1}{|c|}{$40$} & $1210n$ & $3078n$ \\ 
\hline \end{tabular}
\quad
\newline
\end{center}
\caption{Comparison of work sufficient to solve a problem with $n = 10^9$, and various values of $\kappa$ and $\varepsilon$. The work was computed using formula \eqref{eq:shd6dhd7}, with $m(\kk)$ as in \eqref{eq:m:nuismu0_2}. The notation ${\cal W}_\nu(\kk)$ indicates what parameter $\nu$ was used.}
\label{tbl:suboptbounds}
\end{table}

\section{Complexity Analysis: Convex Loss} \label{SEC:convex}

If $f$ is convex but not strongly convex, we define  $\hat{f}_i(\xx)\eqdef f_i(\xx)+\tfrac{\mu}{2}\|\xx-\xx_0\|^2$, for small enough $\mu>0$ (we shall see below how the choice of $\mu$ affects the results), and consider the perturbed problem
\begin{equation}\label{eq:shs7hss}\min_{x \in \R^d} \hat{f}(x),\end{equation} 
where 
\begin{equation}\label{eq:barf} \hat{f}(x) \eqdef \frac{1}{n} \sum_{i=1}^n\hat{f}_i(x) = f(x) + \frac{\mu}{2}\|x-x_0\|^2.\end{equation}
Note that 
 $\hat{f}$ is $\mu$-strongly convex and  $(L+\mu)$-smooth. In particular, the theory developed in the previous section applies. 
We propose that S2GD be instead applied to the perturbed problem, and show that an approximate solution of \eqref{eq:shs7hss} is also an approximate solution of \eqref{eq:main} (we will assume that this problem has a minimizer).  

Let $\hat{\xx}_*$ be the (necessarily unique) solution of the perturbed problem \eqref{eq:shs7hss}. The following result describes an important connection between the original problem and the perturbed problem.
 
\begin{lemma} \label{eq:lemma87878}If $\hat{\xx}\in \R^d$ satisfies $\hat{f}(\hat{\xx})\leq \hat{f}(\hat{x}_*) + \delta$, where $\delta>0$, then \[f(\hat{\xx})\leq f(\xx_*) +  \frac{\mu}{2}\|\xx_0-\xx_*\|^2 + \delta.\]
\end{lemma}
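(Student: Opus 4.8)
The plan is to exploit the simple structure of the perturbation $\hat{\ff}(x) = \ff(x) + \tfrac{\mu}{2}\|x - \xx_0\|^2$ together with the two optimality facts available: $\hat{\xx}$ is near-optimal for $\hat{\ff}$ by hypothesis, and $\hat{\xx}_*$ is the exact minimizer of $\hat{\ff}$. The whole argument is a short chain of inequalities, so I would not introduce any auxiliary constructions.

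First I would discard the perturbation at the point $\hat{\xx}$: since $\tfrac{\mu}{2}\|\hat{\xx}-\xx_0\|^2 \geq 0$, we immediately get $\ff(\hat{\xx}) \leq \ff(\hat{\xx}) + \tfrac{\mu}{2}\|\hat{\xx}-\xx_0\|^2 = \hat{\ff}(\hat{\xx})$. Next I would invoke the hypothesis $\hat{\ff}(\hat{\xx}) \leq \hat{\ff}(\hat{\xx}_*) + \delta$. The crucial step is then to compare the minimizer $\hat{\xx}_*$ of the perturbed problem against $\xx_*$: because $\hat{\xx}_*$ minimizes $\hat{\ff}$ over all of $\R^d$ and $\xx_*$ is just one feasible point, we have $\hat{\ff}(\hat{\xx}_*) \leq \hat{\ff}(\xx_*)$. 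Finally I would expand $\hat{\ff}(\xx_*) = \ff(\xx_*) + \tfrac{\mu}{2}\|\xx_* - \xx_0\|^2$ using the definition \eqref{eq:barf}.

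Chaining these four observations yields
\[
\ff(\hat{\xx}) \;\leq\; \hat{\ff}(\hat{\xx}) \;\leq\; \hat{\ff}(\hat{\xx}_*) + \delta \;\leq\; \hat{\ff}(\xx_*) + \delta \;=\; \ff(\xx_*) + \frac{\mu}{2}\|\xx_* - \xx_0\|^2 + \delta,
\]
which is exactly the claim after noting $\|\xx_* - \xx_0\| = \|\xx_0 - \xx_*\|$.

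There is no real obstacle here; the only point requiring a moment's thought is recognizing that the relevant comparison is $\hat{\ff}(\hat{\xx}_*) \leq \hat{\ff}(\xx_*)$ rather than any statement involving $\ff(\hat{\xx}_*)$ or $\ff(\xx_*)$ directly — that is, one must evaluate the perturbed objective at the \emph{original} optimizer $\xx_*$ to extract the penalty term $\tfrac{\mu}{2}\|\xx_0-\xx_*\|^2$. Everything else is sign-tracking on the nonnegative quadratic term.
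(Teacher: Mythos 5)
Your proof is correct and complete: the four-step chain $f(\hat{x}) \leq \hat{f}(\hat{x}) \leq \hat{f}(\hat{x}_*) + \delta \leq \hat{f}(x_*) + \delta = f(x_*) + \tfrac{\mu}{2}\|x_0 - x_*\|^2 + \delta$ is exactly the standard argument. The paper itself does not spell out a proof but merely defers to Lemma~9 of the cited coordinate-descent paper, whose proof proceeds by the same chain of inequalities, so your self-contained write-up matches (and if anything improves on) the paper's treatment.
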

\begin{proof} The statement is almost identical to  Lemma~9 in \cite{richtarik}; its proof follows the same steps with only minor adjustments.
\end{proof}

We are now ready to establish a complexity result for non-strongly convex losses.

\begin{theorem} Let Assumption~\ref{ass:Lip} be satisfied. Choose $\mu>0$, $0\leq \nu \leq \mu$, stepsize $0< h  < \tfrac{1}{2(L+\mu)}$, and let $m$ be sufficiently large so that \begin{equation}\label{eq:hshshs7} \hat{\cc} \eqdef \frac{(1 -\nu h)^m}{\beta \mu h (1 - 2(L+\mu)h)} + \frac{2 Lh}{1 - 2(L+\mu)h} < 1. \end{equation}
Pick $x_0\in \R^d$ and let $\hat{x}_0 = x_0, \hat{x}_1,\dots,\hat{x}_{\kk}$ be the sequence of iterates produced by S2GD as applied to problem \eqref{eq:shs7hss}.
Then, for any $ 0 < \rho < 1 $, $ 0 < \varepsilon < 1 $ and  \begin{equation}
\kk \geq \frac{\log \left(1/(\varepsilon \rho)\right)}{\log (1/\hat{\cc})}, 
\label{eq:hprobs2}
\end{equation} 
we have \begin{equation}\label{eq:prob2} \Prob\left(f(\hat{\xx}_{\kk}) - f(\xx_*) \leq \varepsilon(f(\xx_0)-f(\xx_*)) + \frac{\mu}{2}\|x_0-x_*\|^2\right) \geq 1 - \rho. \end{equation}
In particular, if we choose $\mu=\epsilon< L$ and parameters $\kk^*$, $h(\kk^*)$, $m(\kk^*)$ as in Theorem~\ref{thm:main2}, the amount of work  performed by S2GD to guarantee \eqref{eq:prob2} is
\[{\cal W}(\kk^*,h(\kk^*),m(\kk^*)) = O\left((n+\tfrac{L}{\varepsilon})\log(\tfrac{1}{\varepsilon})\right),\]
which consists of $O(\tfrac{1}{\varepsilon})$ full gradient evaluations and $O(\tfrac{L}{\epsilon}\log(\tfrac{1}{\varepsilon}))$ stochastic gradient evaluations.

\label{thm:hpresult2}
\end{theorem}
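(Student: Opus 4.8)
The plan is to reduce the non-strongly-convex problem to the strongly convex theory already in hand by running S2GD on the regularized surrogate $\hat{\ff}$ of \eqref{eq:barf}, and then to transfer the resulting guarantee back to $\ff$ through Lemma~\ref{eq:lemma87878}. The starting observation is the one already recorded after \eqref{eq:barf}: $\hat{\ff}$ is $\mu$-strongly convex and $(L+\mu)$-smooth, so Assumptions~\ref{ass:Lip} and \ref{ass:strong} hold for the perturbed problem with smoothness constant $L+\mu$ in place of $L$. Consequently the contraction factor $\hat{\cc}$ in the statement is \emph{exactly} the constant $\cc$ of Theorem~\ref{thm:MAIN} evaluated with $L\mapsto L+\mu$: each denominator $1-2Lh$ turns into $1-2(L+\mu)h$, and the numerator $2(L-\mu)h$ turns into $2((L+\mu)-\mu)h=2Lh$. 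Hence Theorem~\ref{thm:MAIN} and Theorem~\ref{thm:hpresult} apply verbatim to $\hat{\ff}$, and the threshold \eqref{eq:hprobs2} is just \eqref{eq:hprobs} with $\cc$ replaced by $\hat{\cc}$.

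First I would invoke Theorem~\ref{thm:hpresult} for the perturbed problem: for $\kk$ as in \eqref{eq:hprobs2}, with probability at least $1-\rho$ one has the relative bound $\hat{\ff}(\hat{\xx}_{\kk})-\hat{\ff}(\hat{\xx}_*)\leq \varepsilon\bigl(\hat{\ff}(\hat{\xx}_0)-\hat{\ff}(\hat{\xx}_*)\bigr)$. The key simplification is to control the initial gap on the right. Since $\hat{\xx}_0=\xx_0$ and the quadratic penalty vanishes at $\xx_0$, we have $\hat{\ff}(\hat{\xx}_0)=\ff(\xx_0)$; and since $\hat{\ff}(\xx)=\ff(\xx)+\tfrac{\mu}{2}\|\xx-\xx_0\|^2\geq \ff(\xx)\geq \ff(\xx_*)$ for every $\xx$, we get $\hat{\ff}(\hat{\xx}_*)\geq \ff(\xx_*)$. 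Combining these gives $\hat{\ff}(\hat{\xx}_0)-\hat{\ff}(\hat{\xx}_*)\leq \ff(\xx_0)-\ff(\xx_*)$, so on the same event $\hat{\ff}(\hat{\xx}_{\kk})-\hat{\ff}(\hat{\xx}_*)\leq \varepsilon\bigl(\ff(\xx_0)-\ff(\xx_*)\bigr)$.

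Next I would transfer this to the original objective. Applying Lemma~\ref{eq:lemma87878} to $\hat{\xx}=\hat{\xx}_{\kk}$ with $\delta=\varepsilon(\ff(\xx_0)-\ff(\xx_*))$ yields $\ff(\hat{\xx}_{\kk})\leq \ff(\xx_*)+\tfrac{\mu}{2}\|\xx_0-\xx_*\|^2+\varepsilon(\ff(\xx_0)-\ff(\xx_*))$, which is precisely the event inside \eqref{eq:prob2}. Because this is a deterministic implication that holds whenever the high-probability event of Theorem~\ref{thm:hpresult} holds, \eqref{eq:prob2} follows with probability at least $1-\rho$.

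Finally, for the workload I would set $\mu=\varepsilon<L$, so that the condition number of the perturbed problem is $\hat{\kappa}=(L+\mu)/\mu=(L+\varepsilon)/\varepsilon\leq 2L/\varepsilon=O(L/\varepsilon)$. Feeding $\hat{\kappa}$ and $\kk^*=\lceil\log(1/\varepsilon)\rceil$ into Theorem~\ref{thm:main2}, one has $1/\Delta\leq \exp(1)$ and hence $m(\kk^*)=O(\hat{\kappa})=O(L/\varepsilon)$, giving ${\cal W}=\kk^*(n+2m(\kk^*))=O\bigl((n+L/\varepsilon)\log(1/\varepsilon)\bigr)$, which splits into the $\kk^*\cdot n$ full-gradient term, namely $O(\log(1/\varepsilon))$ full-gradient evaluations, and $O(\tfrac{L}{\varepsilon}\log(1/\varepsilon))$ stochastic-gradient evaluations. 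I do not expect a single hard estimate here: the argument is essentially bookkeeping layered on Theorem~\ref{thm:hpresult}, Theorem~\ref{thm:main2}, and Lemma~\ref{eq:lemma87878}. The two spots that need care are the passage from the multiplicative guarantee to the additive guarantee of \eqref{eq:prob2}, which rests on the bound $\hat{\ff}(\hat{\xx}_0)-\hat{\ff}(\hat{\xx}_*)\leq \ff(\xx_0)-\ff(\xx_*)$, and the verification that the $\varepsilon$-dependent condition number $\hat{\kappa}=O(L/\varepsilon)$ still produces $m(\kk^*)=O(\hat{\kappa})$, which relies on $\kk^*=O(\log(1/\varepsilon))$ keeping $1/\Delta$ bounded by a constant.
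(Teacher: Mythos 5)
Your proposal is correct and follows essentially the same route as the paper: bound the initial gap of the perturbed problem by $\hat{\ff}(\hat{\xx}_0)-\hat{\ff}(\hat{\xx}_*)\leq \ff(\xx_0)-\ff(\xx_*)$, apply Theorem~\ref{thm:hpresult} to $\hat{\ff}$ (with $\hat{\cc}$ arising from $L\mapsto L+\mu$), transfer back via Lemma~\ref{eq:lemma87878} with $\delta=\varepsilon(\ff(\xx_0)-\ff(\xx_*))$, and plug $\hat{\kappa}\leq 2L/\varepsilon$ into Theorem~\ref{thm:main2} for the workload. Your count of $O(\log(1/\varepsilon))$ full gradient evaluations is in fact the sharper (and correct) consequence of $\kk^*=\lceil\log(1/\varepsilon)\rceil$, consistent with the stated bound.
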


\begin{proof}
We first note that
\begin{equation}\label{eq:uddjd8}\hat{f}(\hat{\xx}_0)-\hat{f}(\hat{\xx}_*) \overset{\eqref{eq:barf}}{=} f(\hat{\xx}_0)-\hat{f}(\hat{\xx}_*) \leq f(\hat{x}_0) - f(\hat{x}_*) \leq f(x_0)-f(x_*),\end{equation}
where the first inequality follows from $f\leq \hat{f}$, and the second one from optimality of $x_*$. Hence, by first applying
Lemma~\ref{eq:lemma87878} with $\hat{x}=\hat{x}_{\kk}$ and $\delta= \varepsilon(f(\xx_0)-f(\xx_*))$, and then
 Theorem~\ref{thm:hpresult}, with $c\leftarrow \hat{c}$, $f\leftarrow \hat{f}$, $x_0\leftarrow \hat{x}_0$, $x_*\leftarrow \hat{x}_* $, we obtain 
\begin{eqnarray*}
\Prob\left(f(\hat{\xx}_{\kk}) - f(\xx_*) 
\leq  \delta +\frac{\mu}{2}\|\xx_0-\xx_*\|^2\right) &\overset{(\text{Lemma~}\ref{eq:lemma87878})}{\geq}&
\Prob\left(\hat{f}(\hat{x}_{\kk})-\hat{f}(\hat{x}_*)\leq \delta \right) \\
& \overset{\eqref{eq:uddjd8}}{\geq} & \Prob\left(
\frac{\hat{f}(\hat{x}_{\kk})-\hat{f}(\hat{x}_*)}{\hat{f}(\hat{x}_0)-\hat{f}(\hat{x}_*)} \leq \varepsilon \right) \;\; \overset{\eqref{eq:sjnd8djd}}{\geq} \;\;1-\rho.\end{eqnarray*}
The second statement follows directly from the second part of Theorem~\ref{thm:main2} and the fact that the condition number of the perturbed problem is $\kappa = \tfrac{L+\epsilon}{\epsilon} \leq \tfrac{2L}{\epsilon}$. 
\end{proof}

\section{Numerical Experiments}
\label{SEC:NUMERICS}

In this section we conduct computational experiments to illustrate some aspects of the performance of our algorithm. In Section~\ref{sec:theoryvspractice} we consider the least squares problem with synthetic data to compare the practical performance and the theoretical bound on convergence in expectations. We demonstrate that for both SVRG and S2GD, the practical rate is substantially better than the theoretical one. In Section~\ref{sec:sparses2gd} we explain an efficient way to implement the S2GD algorithm for sparse datasets. In Section~\ref{sec:othermethods} we compare the S2GD algorithm on several real datasets with other algorithms suitable for this task. We also provide efficient implementation of the algorithm for the case of logistic regression in the MLOSS repository\footnote{\url{http://mloss.org/software/view/556/}}.

\subsection{Comparison with theory} 
\label{sec:theoryvspractice}

Figure~\ref{fig:ThVsPracLin} presents a comparison of the theoretical rate and practical performance on a larger problem with artificial data, with a condition number we can control (and choose it to be poor). In particular, we consider the L2-regularized least squares  with $$ f_i(x) = \frac{1}{2}(a_i^Tx - b_i)^2 + \frac{\lambda}{2} \|x\|^2, $$ for some $a_i \in \R^d$, $b_i \in \R$ and $\lambda>0$ is the regularization parameter.

\begin{figure}[!h]
\begin{center}
\includegraphics[width = 5.5in]{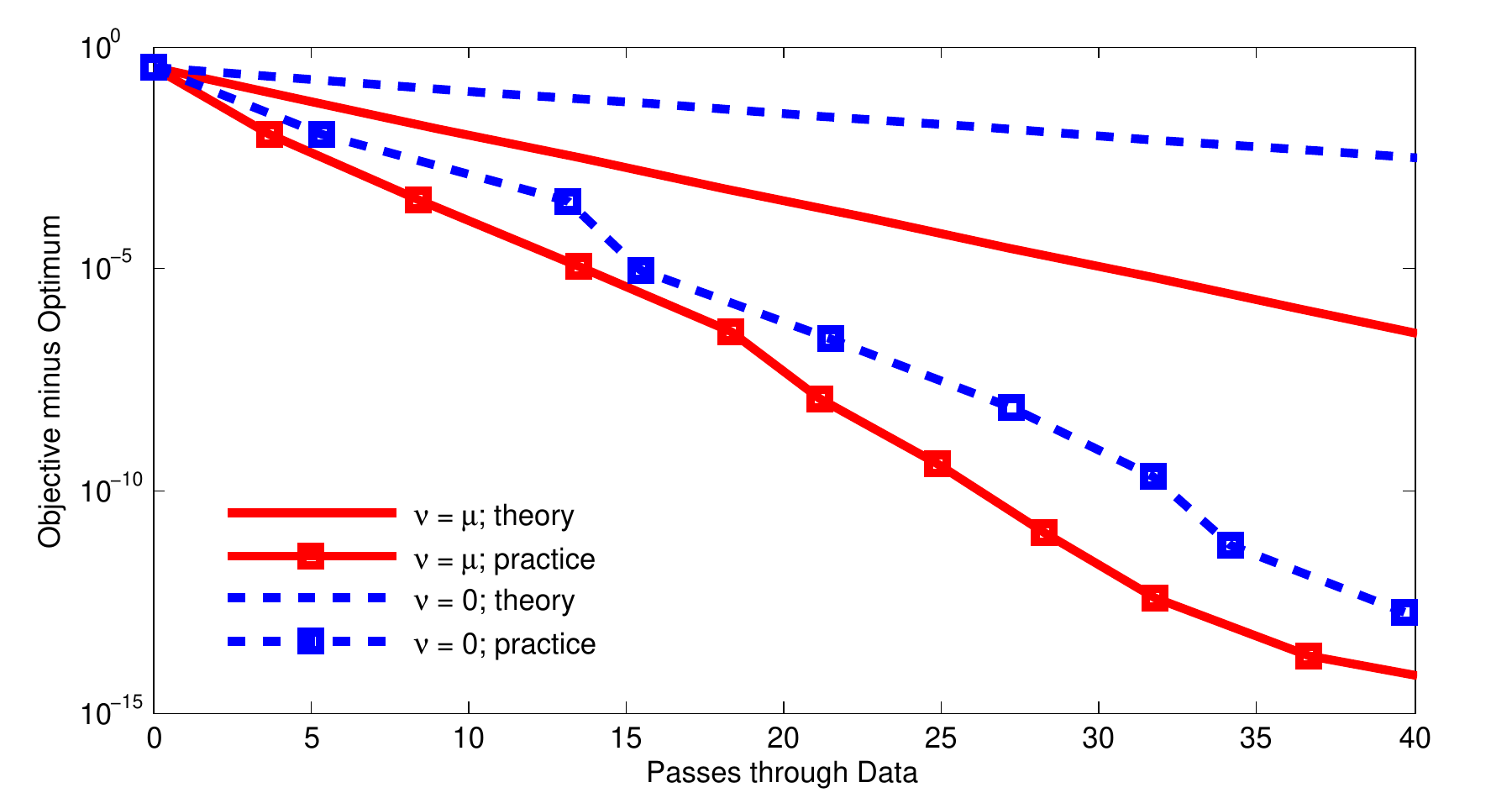}
\end{center}
\caption{Least squares with $n = 10^5$, $\kappa = 10^4$. Comparison of theoretical result and practical performance for cases $\nu = \mu$ (full red line) and $\nu = 0$ (dashed blue line).}
\label{fig:ThVsPracLin}
\end{figure}

We consider an instance with $n = 100,000$, $d = 1,000$ and $\kappa = 10,000.$ We run the algorithm with both parameters $\nu = \lambda$ (our best estimate of $\mu$) and $\nu = 0$. Recall that the latter choice leads to the SVRG method of Johnson and Zhang~\cite{svrg}. We chose parameters $m$ and $h$ as a (numerical) solution of the work-minimization problem \eqref{eq:pracReq}, obtaining $m = 261,063$ and $h = 1/11.4L$ for $\nu = \lambda$ and $m = 426,660$ and $h = 1/12.7L$ for $\nu = 0$. The practical performance is obtained after a single run of the S2GD algorithm. 

The figure demonstrates linear convergence of S2GD in practice, with the convergence rate being significantly better than the already strong theoretical result. Recall that the bound is on the expected function values. We can observe a rather strong convergence to machine precision in work equivalent to evaluating the full gradient only $40$ times. Needless to say, neither  SGD nor GD have such speed.  Our method is also an improvement over \cite{svrg}, both in theory and practice.

\subsection{Implementation for sparse data}
\label{sec:sparses2gd}

In our sparse implementation of Algorithm ~\ref{SVRG}, described in this section and formally stated  as Algorithm~\ref{alg:SVRGsparse},  we make the following structural assumption:

\begin{assumption}\label{ass:sparse} The loss functions arise as the composition of a univariate smooth loss function $\phi_i$, and an inner product with a data point/example $a_i\in \R^d$: \[f_i(x) = \phi_i(a_i^T x), \qquad i=1,2,\dots,n.\]  In this case, $ f'_i(x) =  \phi_i'(a_i^T x) a_i$.
\end{assumption}

This is the  structure   in many cases of interest, including linear or logistic regression.  

A natural question one might want to ask is whether S2GD can be implemented efficiently for sparse data. 

Let us first take a brief detour and look at SGD, which performs iterations of the type:
\begin{equation}\label{eq:sgdxx}x_{\kk+1} \leftarrow x_\kk - h \phi_i'(a_i^T x) a_i.\end{equation}
 Let  $\omega_i$ be the number of nonzero features in example $a_i$, i.e., $\omega_i \eqdef \|a_i\|_0 \leq d$. Assuming that the computation of the derivative of the univariate function $\phi_i$ takes $O(1)$ amount of work, 
the computation of $\nabla f_i(x)$ will take $O(\omega_i)$ work. Hence, the update step \eqref{eq:sgdxx}  will  cost $O(\omega_i)$, too, which means the method can naturally speed up its iterations on sparse data.

The situation is not as simple with  S2GD, which for loss functions of the type described in Assumption~\ref{ass:sparse}  performs inner iterations as follows:
\begin{equation} \label{eq:s2gd-update}\yy_{\kk,t+1} \gets \yy_{\kk,t} - h \left( \grad_{\kk} + \phi_i'(a_i^T \yy_{\kk,t} )a_i - \phi_i'( a_i^T \xx_{\kk}) a_i  \right) .\end{equation}
 Indeed, note that $g_j = f'(x_j)$ is in general be fully dense even for sparse data $\{a_i\}$. As a consequence, the update in \eqref{eq:s2gd-update}  might be as costly as $d$ operations, irrespective of the sparsity level $\omega_i$ of the active  example $a_i$.  However, we can use the following ``lazy/delayed'' update trick. 
  We split the update to the $y$ vector into two parts: immediate, and delayed. Assume index $i=i_t$ was chosen at  inner iteration $t$. We immediately perform the update 
 \[\tilde{y}_{j,t+1} \leftarrow y_{j,t} - h \left(  \phi_{i_t}'(a_{i_t}^T \yy_{\kk,t}) - \phi_{i_t}'(a_{i_t}^T \xx_{\kk}) \right)a_{i_t},\]
 which costs $O(a_{i_t})$. Note that we have not computed the  $y_{j,t+1}$. However, we ``know'' that \[y_{j,t+1} = \tilde{y}_{j,t+1}- h g_j,\]
 without having to actually compute the difference. At the next iteration,  we are supposed to perform update \eqref{eq:s2gd-update} for $i=i_{t+1}$:
\begin{equation} \label{eq:s2gd-updateXX}\yy_{\kk,t+2} \gets \yy_{\kk,t+1} - h  \grad_{\kk} - h \left(\phi_{i_{t+1}}'(a_{i_{t+1}}^T\yy_{\kk,t+1})- \phi_{i_{t+1}}'(a_{i_{t+1}}^T \xx_{\kk})  \right)a_{i_{t+1}} .\end{equation}
 

\begin{algorithm}
\begin{algorithmic}
\State \textbf{parameters:} $m$ = max \# of stochastic steps per epoch, $h$ = stepsize, $\nu$ = lower bound on $\mu$
\For {$\kk = 0, 1, 2, \dots$}
	\State $\grad_{\kk} \gets \frac{1}{n} \sum_{i=1}^n f_i'(\xx_{\kk})$
	\State $\yy_{\kk,0} \gets \xx_{\kk}$
	\State $\chi^{(s)} \gets 0$ for $s = 1, 2, \dots, d$	
	\Comment Store when a coordinate was updated last time
	\State Let $t_{\kk} \gets t$ with probability $(1 - \nu h)^{m-t} / \beta $ for $t = 1, 2, \dots, m$
	\For {$t = 0$ to $t_{\kk}-1$}
		\State Pick $i \in \{ 1, 2, \dots, n \}$, uniformly at random
		\For {$s \in \text{nnz}(a_i)$}
			\State $\yy_{\kk,t}^{(s)} \gets \yy_{\kk,t}^{(s)} - (t - \chi^{(s)}) h g_j^{(s)} $
			\Comment Update what will be needed
			\State $\chi^{(s)} = t$ 
		\EndFor
		\State $ \yy_{\kk,t+1} \gets \yy_{\kk,t} - h \left( \phi_i'(\yy_{\kk,t}) - \phi_i'(\xx_{\kk})  \right)a_i $
		\Comment A sparse update
	\EndFor
	\For {$ s = 1$ to $d$} \Comment Finish all the ``lazy'' updates 
		\State $\yy_{\kk, t_{\kk}}^{(s)} \gets \yy_{\kk, t_{\kk}}^{(s)} - (t_j - \chi^{(s)}) h   g_j^{(s)} $
	\EndFor
	\State $\xx_{\kk+1} \gets \yy_{\kk, t_{\kk}}$
\EndFor
\end{algorithmic}

\caption{Semi-Stochastic Gradient Descent (S2GD) for sparse data, using ``lazy'' updates}
\label{alg:SVRGsparse}
\end{algorithm}

However, notice that we can't compute 
\begin{equation}\label{eq:iusi89s}\phi_{i_{t+1}}'(a_{i_{t+1} }^Ty_{j,t+1})\end{equation}
 as we never computed $y_{j,t+1}$. However, here lies the trick: as $a_{i_{t+1}}$ is sparse, we only need to know those coordinates $s$ of $y_{j,t+1}$ for which $a_{i_{t+1}}^{(s)}$ is nonzero. So, just before we compute the (sparse part of) of the update \eqref{eq:s2gd-updateXX}, we perform the update 
\[y_{j,t+1}^{(s)} \leftarrow \tilde{y}_{j,t+1}^{(s)} - h g_j^{(s)}\]
for coordinates $s$ for which  $a_{i_{t+1}}^{(s)}$ is nonzero. This way we know that the inner product appearing in \eqref{eq:iusi89s} is computed correctly (despite the fact that $y_{j,t+1}$ potentially is not!). In turn, this means that we can compute the sparse part of the update in \eqref{eq:s2gd-updateXX}. 

We now continue as before, again only computing $\tilde{y}_{j,t+3}$. However, this time we have to be more careful as  it is no longer true that
\[y_{j,t+2} = \tilde{y}_{j,t+2} - h g_j.\]
We need to remember, for each coordinate $s$, the last iteration counter $t$ for which $a_{i_t}^{(s)}\neq 0$. This way we will know how many times did we ``forget'' to apply the dense update $-h g_j^{(s)}$. We do it in a just-in-time fashion, just before it is needed.

Algorithm~\ref{alg:SVRGsparse} (sparse S2GD) performs these lazy updates as described above. It produces exactly the same result as Algorithm~\ref{SVRG} (S2GD), but is much more efficient for sparse data as  iteration picking example $i$ only costs $O(\omega_i)$. This is done with a memory overhead of only $O(d)$ (as represented by vector $\chi \in \R^d$).



\subsection{Comparison with other methods}
\label{sec:othermethods}

The S2GD algorithm can be applied to several classes of problems. We perform experiments on an important and in many applications used L2-regularized logistic regression for binary classification on several datasets. The functions $f_i$ in this case are: $$ f_i(x) = \log\left(1 + \exp\left(l_i a_i^T x \right) \right) + \frac{\lambda}{2} \| x \|^2, $$ where $l_i$ is the label of $i^{th}$ training exapmle $a_i$. In our experiments we set the regularization parameter $\lambda = O(1/n)$ so that the condition number $\kappa = O(n)$, which is about the most ill-conditioned problem used in practice. We added a (regularized) bias term to all datasets.

All the datasets we used, listed in Table~\ref{tbl:datasets}, are freely available\footnote{Available at \href{http://www.csie.ntu.edu.tw/~cjlin/libsvmtools/datasets/}{http://www.csie.ntu.edu.tw/$\sim$cjlin/libsvmtools/datasets/}.} benchmark binary classification datasets.

\begin{table}[!h]
\centering
\begin{tabular}{| r | r | r | r | r | r |} 
\hline
Dataset & Training examples ($n$) & Variables ($d$) & $L$ & $\mu$ & $\kappa$ \\
\hline
\textit{ijcnn} & 49 990 & 23 & 1.23 & 1/$n$ & 61 696 \\
\textit{rcv1} & 20 242 & 47 237 & 0.50 & 1/$n$ & 10 122 \\
\textit{real-sim} & 72 309 & 20 959 & 0.50 & 1/$n$ & 36 155 \\
\textit{url} & 2 396 130 & 3 231 962 & 128.70 & 100/$n$ & 3 084 052 \\
\hline 
\end{tabular}
\caption{Datasets used in the experiments.}
\label{tbl:datasets}
\end{table}

In the experiment, we compared the following algorithms:
\begin{itemize}
\item \textbf{SGD:} Stochastic Gradient Descent. After various experiments, we decided to use a variant with constant step-size that gave the best practical performance in hindsight.
\item \textbf{L-BFGS:} A publicly-available limited-memory quasi-Newton method that is suitable for broader classes of problems. We used a popular implementation by Mark Schmidt.\footnote{\href{http://www.di.ens.fr/~mschmidt/Software/minFunc.html}{http://www.di.ens.fr/$\sim$mschmidt/Software/minFunc.html}}
\item \textbf{SAG:} Stochastic Average Gradient \cite{SAGjournal2013}. This is the most important method to compare to, as it also achieves linear convergence using only stochastic gradient evaluations. Although the methods has been analysed for stepsize $h = 1/16L$, we experimented with various stepsizes and chose the one that gave the best performance for each problem individually.
\item \textbf{S2GDcon:} The S2GD algorithm with conservative stepsize choice, i.e., following the theory. We set $m = O(\kappa)$ and $h = 1/10L$, which is approximately the value you would get from Equation~\eqref{eq:shhsdd998}
\item \textbf{S2GD:} The S2GD algorithm, with stepsize that gave the best performance in hindsight.
\end{itemize}

Note that SAG needs to store $n$ gradients in memory in order to run. In case of relatively simple functions, one can store only $n$ scalars, as the gradient of $f_i$ is always a multiple of $a_i$. If we are comparing with SAG, we are implicitly assuming that our memory limitations allow us to do so. Although not included in Algorithm~\ref{SVRG}, we could also store these gradients we used to compute the full gradient, which would mean we would only have  to compute a single stochastic  gradient per inner iteration (instead of two).

We plot the results of these methods, as applied to various different, in the Figure~\ref{fig:plotTogether} for first  15-30 passes through the data (i.e., amount of work work equivalent to 15-30 full gradient evaluations).

\begin{figure}[!h]
\begin{center}
\includegraphics[angle = 270, width = \linewidth]{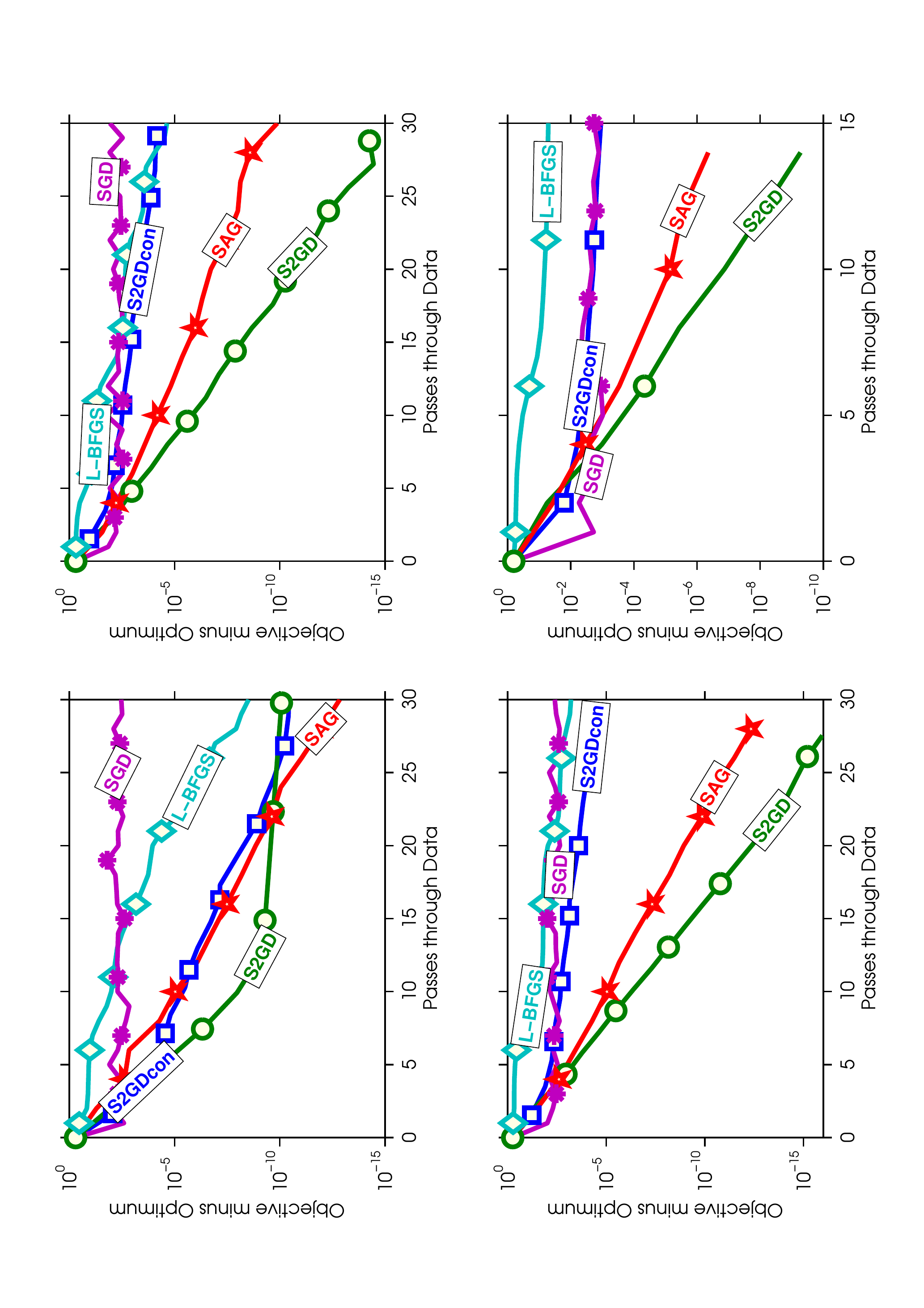}
\end{center}
\caption{Practical performance for logistic regression and  the following datasets: \textit{ijcnn, rcv} (first row), \textit{realsim, url} (second row)}
\label{fig:plotTogether}
\end{figure}

There are several remarks  we would like to make. First, our experiments confirm the insight from \cite{SAGjournal2013} that for this types of problems, reduced-variance methods consistently exhibit substantially better performance than the popular L-BFGS algorithm.

The performance gap between S2GDcon and S2GD differs from dataset to dataset. A possible explanation for this can be found in an extension of SVRG to proximal setting \cite{proxSVRG}, released after the first version of this paper was put onto arXiv (i.e., after December 2013) . Instead Assumption~\ref{ass:Lip}, where all loss functions are assumed to be associated with  the same constant $L$, the authors of \cite{proxSVRG} instead assume that each loss function $f_i$ has its own constant  $L_i$. Subsequently, they sample proportionally to these quantities as opposed to the  uniform sampling. In our case, $L = \max_i L_i$. This importance sampling has an impact on the convergence: one gets dependence on the average of the quantities $L_i$ and not in their maximum.


The number of passes through data seems a reasonable way to compare performance, but some algorithms could need more time to do the same amount of passes through data than others. In this sense, S2GD should be in fact faster than SAG due to the following property. While SAG updates the test point after each evaluation of a stochastic gradient, S2GD does not always make the update --- during the evaluation of the full gradient. This claim is supported by computational evidence: SAG needed about 10-30\% more time than S2GD to do the same amount of passes through data.

Finally, in Table~\ref{tbl:experimentsTime} we provide the time it took the algorithm  to produce these plots on a desktop computer with Intel Core i7 3610QM processor, with 2 $\times$ 4GB DDR3 1600 MHz memory. The number for L-BFGS at the \textit{url} dataset is not representative, as the algorithm needed extra memory, which slightly exceeded the memory limit of our computer.

\begin{table}
\centering
\begin{tabular}{c|r|r|r|r|}
\cline{2-5}
 & \multicolumn{4}{c|}{Time in seconds} \\
\hline
\multicolumn{1}{|c|}{Algorithm} & \textit{ijcnn} & \textit{rcv1} & \textit{real-sim} & \textit{url} \\
\hline
\multicolumn{1}{|c|}{S2GDcon} & 0.42 & 0.89 & 1.74 & 78.13 \\
\multicolumn{1}{|c|}{S2GD}    & 0.51 & 0.98 & 1.65 & 85.17 \\
\multicolumn{1}{|c|}{SAG}     & 0.64 & 1.23 & 2.64 & 98.73 \\
\multicolumn{1}{|c|}{L-BFGS}  & 0.37 & 1.08 & 1.78 & 830.65 \\
\hline
\end{tabular}
\caption{Time required to produce plots in Figure~\ref{fig:plotTogether}.}
\label{tbl:experimentsTime}
\end{table}

\subsection{Boosted variants of S2GD and SAG}
\label{sec:plusVariants}

In this section we study the practical performance of boosted methods, namely S2GD+ (Algorithm~\ref{alg:S2GD+}) and variant of SAG suggested by its authors \cite[Section 4.2]{SAGjournal2013}.

\begin{figure}[!h]
\begin{center}
\includegraphics[angle = 270, width = \linewidth]{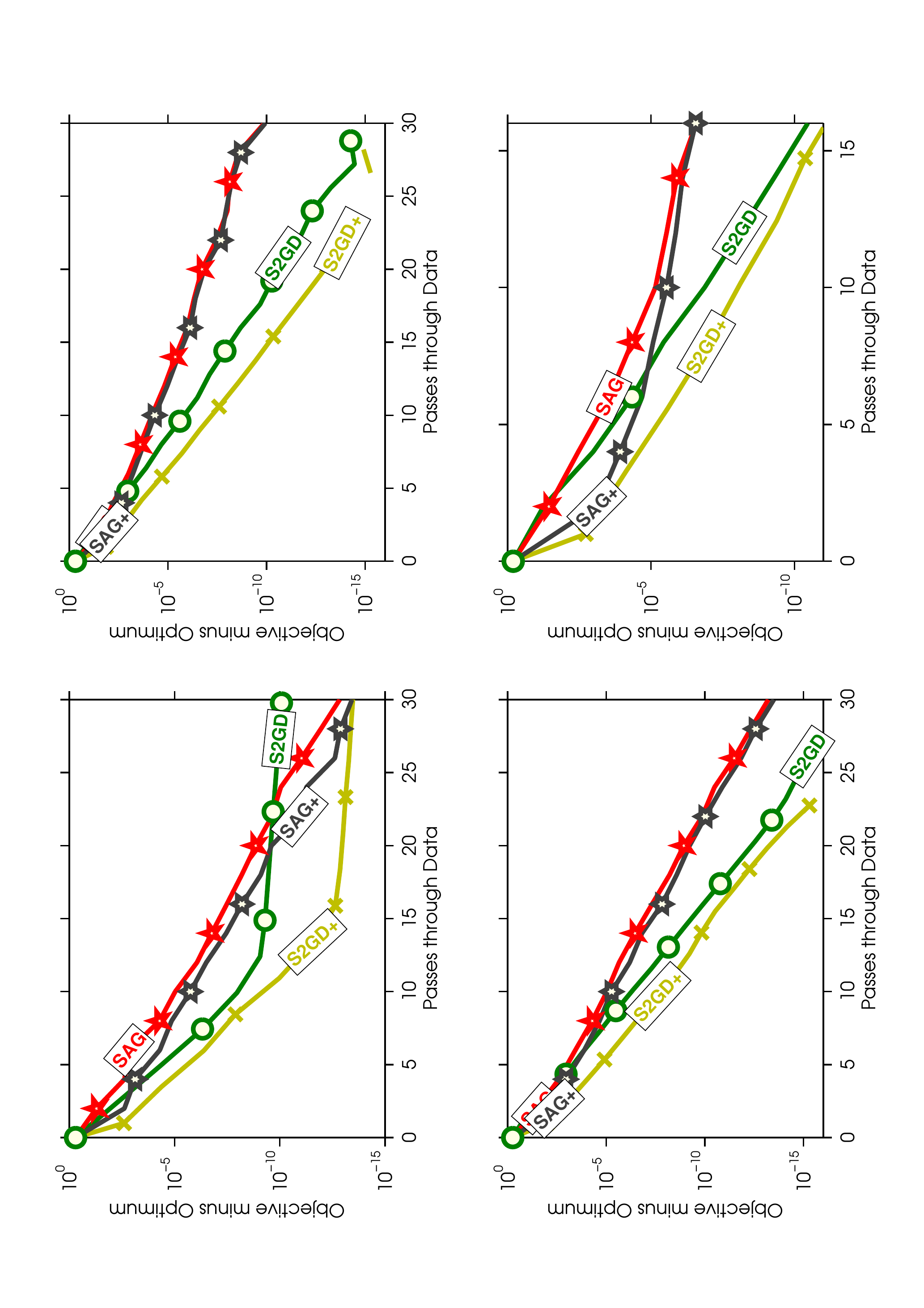}
\end{center}
\caption{Practical performance of boosted methods on datasets \textit{ijcnn, rcv} (first row), \textit{realsim, url} (second row)}
\label{fig:plotTogetherPlus}
\end{figure}

 SAG+  is a simple modification of SAG, where one does not divide the  sum of the stochastic gradients by $n$, but by the number of training examples seen during the run of the algorithm, which has the effect of producing larger steps at the beginning. The authors claim that this method performed better in practice than a hybrid SG/SAG algorithm. 
 
 
We have observed that, in practice, starting SAG from a point close to the optimum, leads to an initial  ``away jump``. Eventually, the method exhibits   linear convergence. In contrast, S2GD converges linearly from the start, regardless of the starting position.

Figure~\ref{fig:plotTogetherPlus} shows that S2GD+ consistently improves over S2GD, while SAG+ does not improve always: sometimes it performs essentially the same as SAG. Although S2GD+ is overall a superior algorithm, one should note that this comes at the cost of having to choose stepsize parameter for SGD initialization. If one chooses these parameters poorly, then S2GD+ could perform worse than S2GD. The other three algorithms can work well without any parameter tuning.

\section{Conclusion} \label{SEC: CONCLUDE}
We have developed a new semi-stochastic gradient descent method (S2GD) and analyzed its complexity for smooth convex and strongly convex loss functions. Our methods  need $O((\kappa/n)\log(1/\varepsilon))$ work only, measured in  units equivalent to the evaluation of the full gradient of the loss function, where $\kappa=L/\mu$  if the loss is $L$-smooth and $\mu$-strongly convex, and $\kappa\leq 2L/\varepsilon$ if the loss is merely $L$-smooth.

Our results in the strongly convex case match or improve on a few very recent results, while at the same time generalizing and simplifying the analysis. Additionally, we proposed  S2GD+ ---a method which equips S2GD with an SGD pre-processing step---which in our experiments exhibits superior performance to all methods we tested. We leave the analysis of this method as an open problem. 


\bibliography{notes}

\end{document}